\documentclass[11pt]{article}

\usepackage[final]{acl}
\usepackage{multirow}
\usepackage{times}
\usepackage{latexsym}
\usepackage{amsmath}
\usepackage{svg}
\usepackage{colortbl}
\usepackage{xcolor}
\usepackage[T1]{fontenc}
\usepackage{rotating}
\usepackage[utf8]{inputenc}

\usepackage{microtype}

\usepackage{inconsolata}
\usepackage{booktabs}

\usepackage{graphicx}
\usepackage{amssymb}
\usepackage{amsthm} 

\theoremstyle{plain}
\newtheorem{theorem}{Theorem}
\newtheorem{lemma}{Lemma}

\newtheorem{assumption}{Assumption}

\theoremstyle{definition}

\usepackage{todonotes}

\usepackage{xcolor}

\newcommand{\methodname}{FLARE}
\title{\methodname: Task-Agnostic Embedding Model Evaluation via Normalizing Flows}

\author{Jingzhou Jiang \and Yixuan Tang \and Yi Yang \and Kar Yan Tam \\
        The Hong Kong University of Science and Technology \\
       \{jjiang105,ytangch\}@connect.ust.hk, \{imyiyang,kytam\}@ust.hk}

\begin{document}
\maketitle
\begin{abstract}
Selecting an embedding model for a specific target corpus is difficult when task-specific labels are unavailable. Existing label-free metrics based on kernel estimators or Gaussian mixtures fail in high-dimensional spaces and produce unstable rankings. We propose \textbf{\methodname} (\textbf{F}low-based \textbf{L}abel-free \textbf{A}ssessment of \textbf{R}epresentation \textbf{E}mbeddings), which uses normalizing flows to estimate information sufficiency directly from log-likelihoods, avoiding distance-based density estimates. We give finite-sample bounds showing that the estimation error depends on the intrinsic dimension of the data manifold rather than the raw embedding dimension. On 11 datasets and eight embedders, \methodname\ attains Spearman's $\rho$ up to $0.90$ with supervised benchmarks and remains stable for high-dimensional embeddings ($d\ge 3{,}584$), where existing label-free baselines collapse.
\end{abstract}

\section{Introduction}

Recent advances in text embeddings have produced powerful semantic representation models such as Qwen3 Embedding~\citep{yang2025qwen3} and Gemini Embedding~\citep{lee2025gemini}. However, as the number of available models grows, each with different architectures, training objectives, and pretraining corpora, selecting the most suitable model for a given corpus has become increasingly challenging. The standard approach relies on supervised benchmarks like MTEB~\citep{muennighoff-etal-2023-mteb}, ranking models by their performance on annotated tasks.


This approach requires labeled data, which is often unavailable in practice. Consider deploying a retrieval system over proprietary documents such as legal contracts, medical records, or financial reports. These collections have no existing labeled query-document pairs, and creating annotations requires significant time and domain expertise. Specialized corpora may also differ substantially from public benchmarks in vocabulary, style, and topic distribution~\citep{tang-yang-2025-finmteb}. An embedding model that ranks highly on MTEB may perform poorly on domain-specific text, but without labels we cannot measure this gap. Benchmark contamination further undermines public leaderboards: as test sets appear in pretraining data, scores become inflated. This raises our central question: \textit{how can we evaluate embedding models without labels?}

Recent work has explored task-agnostic evaluation using only unlabeled corpora. One approach analyzes geometric properties of embedding models like uniformity and alignment~\citep{wang2020understanding,rudman2022isoscore}. However, these metrics measure the embedding hypersphere structure rather than semantic content. A random projection can achieve perfect uniformity while preserving no information. A more principled alternative estimates mutual information between embeddings, quantifying how much information the embedding retains. Existing implementations use non-parametric estimators like Kernel Density Estimation (KDE) or Gaussian Mixtures \citep{darrin2024embedding}. These methods suffer from the curse of dimensionality~\citep{beirlant1997nonparametric}: as embedding dimension grows (modern models often exceed $d=3{,}000$), reliable density estimation requires exponentially more data. In high-dimensional space, these estimators become unstable and fail to predict downstream performance.

In this work, we propose \textbf{F}low-based \textbf{L}abel-free \textbf{A}ssessment of \textbf{R}epresentation \textbf{E}mbeddings (\textbf{\methodname}), a framework grounded in \textbf{information-theoretic sufficiency} \citep{darrin2024embedding}. Specifically, we quantify embedding quality by measuring the reduction in uncertainty about input data given the embedding. The core innovation lies in leveraging Normalizing Flows~\citep{durkan2019neural}, deep generative models that learn invertible transformations from complex distributions to simple base densities. Flows enable exact log-likelihood estimation via the change-of-variables formula, effectively mitigating the curse of dimensionality inherent to distance-based estimators. Our finite-sample bounds provide theoretical justification: the estimation error depends on the intrinsic effective dimension of the data manifold, not the embedding dimension. This ensures that \methodname\ remains reliable when scaling to the high-dimensional embeddings of modern LLMs.

We evaluate \methodname\ on 11 datasets across four task families. It attains Spearman's $\rho$ up to $0.90$ with supervised rankings, outperforms geometry-based and information-theoretic baselines, and remains stable for high-dimensional embeddings ($d\ge 3{,}584$) where existing methods fail.

Our contributions are summarized as follows:
\begin{itemize}
    \item We introduce \methodname, a task-agnostic text embedding evaluation framework using normalizing flows to estimate information sufficiency without labeled data.
    \item We prove finite-sample generalization bounds whose dominant term scales with the intrinsic dimension of the data manifold rather than the raw embedding dimension, explaining the estimator's behaviour in high dimensions.
    \item Across 11 datasets and eight embedders, \methodname\ predicts downstream rankings more reliably than existing baselines, particularly for high-dimensional LLM-based embeddings.
\end{itemize}

\section{Related Work}

\paragraph{Embedding Evaluation.}
Modern text embedding models, often leveraging Large Language Model (LLM) architectures, provide high-dimensional representations that generalize across diverse semantic tasks without requiring task-specific fine-tuning \citep{neelakantan2022text, wang2024textembeddingsweaklysupervisedcontrastive}. Currently, the Massive Text Embedding Benchmark (MTEB) \citep{muennighoff-etal-2023-mteb} and its specialized variants like FinMTEB \citep{tang-yang-2025-finmteb} serve as the primary evaluation standards. These benchmarks aggregate performance across diverse labeled tasks including clustering, retrieval, and semantic textual similarity (STS). All of them depend on ground-truth annotations and are therefore unsuitable when labels are absent, as is typical for proprietary, dynamic, or out-of-distribution corpora. To address this setting, we propose \methodname, a task-agnostic framework that quantifies embedding quality directly from the data distribution, offering a reliable performance proxy without the need for supervision.

\paragraph{Task-Agnostic Approaches.}
Task-agnostic metrics offer a long-standing alternative to supervised benchmarks by eliminating labeling costs. Classic indices such as the Silhouette Score \citep{rousseeuw1987silhouettes} evaluate cluster separation, while more recent studies emphasize spectral and geometric properties. Examples include Effective Rank \citep{roy2007effective} for dimensionality estimation along with Uniformity \citep{wang2020understanding} and IsoScore \citep{rudman2022isoscore} for spatial distribution analysis. A primary limitation of these metrics is their focus on geometric structure rather than semantic content. Such methods often rely on global priors like isotropy, which may not represent the intrinsic low-dimensional manifold structure characteristic of text embeddings. To address this, EMIR \citep{darrin2024embedding} introduces the concept of Information Sufficiency to quantify how well one embedding model can reconstruct another. However, standard implementations of EMIR utilize Gaussian Mixture Models (GMM), which struggle with high-dimensional data and lack the generalization guarantees required for modern LLM-based embeddings.

\paragraph{Density Estimation.}
Calculating information-theoretic measures like differential entropy requires an accurate model of the underlying probability density. Traditional non-parametric approaches, notably Kernel Density Estimation (KDE), are fundamentally limited by the curse of dimensionality \citep{silverman2018density, beirlant1997nonparametric}. In high-dimensional spaces, these methods become statistically inefficient because the sample size needed to control estimation error grows exponentially with dimensionality. While neural variational estimators such as MINE \citep{ishmael2018mine} and CLUB \citep{cheng2020club} improve scalability, they optimize variational bounds instead of exact likelihoods. These bounds often suffer from a severe bias-variance trade-off, leading to loose estimates and optimization instability \citep{poole2019variational}. Normalizing Flows provide a robust alternative by learning a sequence of invertible transformations that map complex data to a simple base distribution \citep{rezende2015variational, dinh2016density}. A key advantage is their support for exact log-likelihood computation via the change-of-variables formula \citep{papamakarios2021normalizing}. Based on this property, we treat embedding evaluation as a problem of estimating probability densities directly. By integrating flows into the information-sufficiency framework \citep{darrin2024embedding}, we ensure that our metrics remain theoretically grounded and empirically stable even for high-dimensional embeddings.

\begin{figure*}[t]
    \centering
    \includegraphics[width=.99\linewidth]{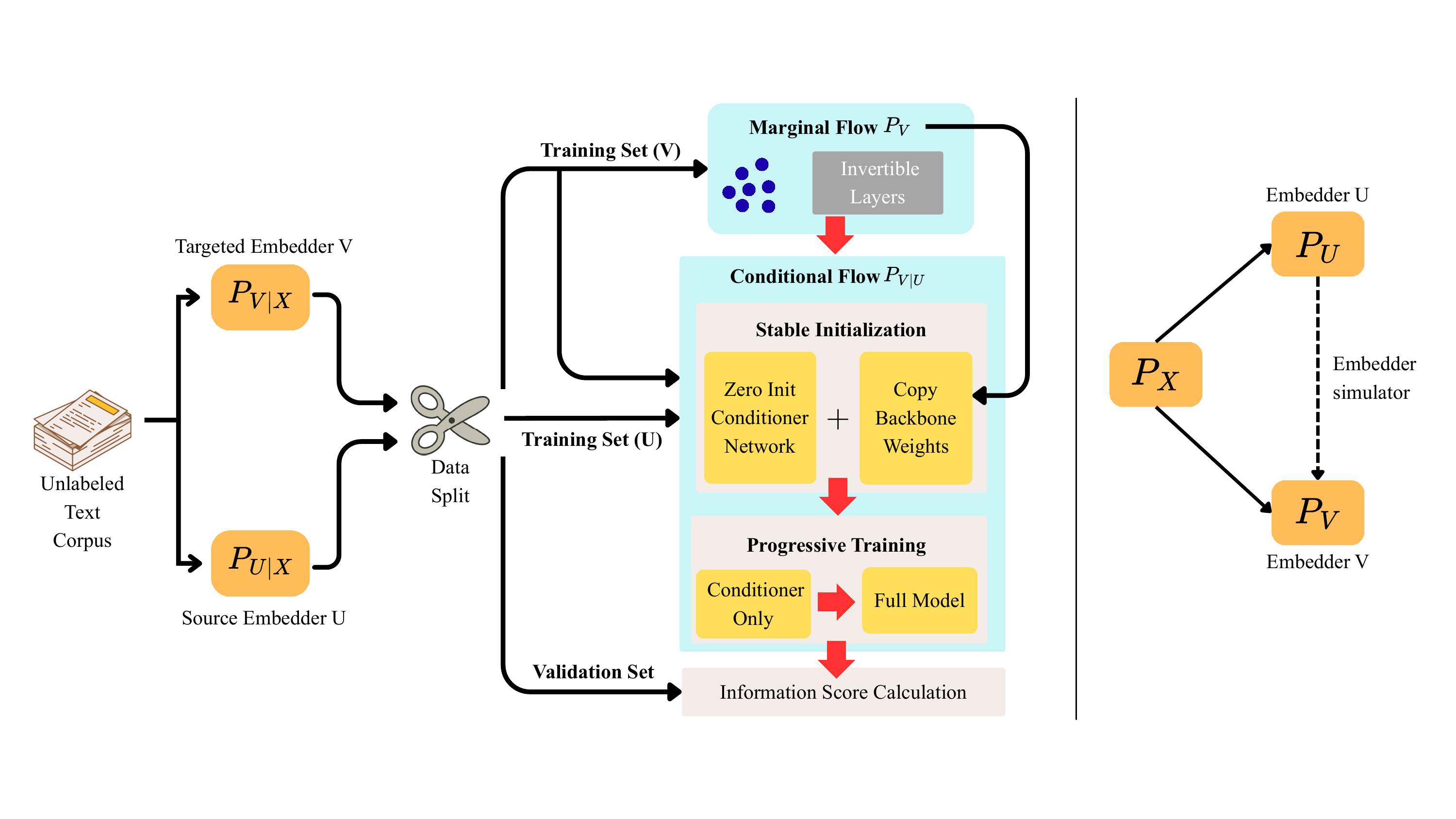}
    \caption{Overview of our two stage flow based estimation pipeline. 
\textbf{Stage 1:} Train a marginal flow $p_\phi(v)$ on target embeddings $V$ to model their intrinsic distribution. 
\textbf{Stage 2:} Initialize a conditional flow $p_\theta(v|u)$ by copying the marginal backbone weights and adding a zero-initialized low-rank conditioning branch. 
This branch is then trained to capture the dependency between source embeddings $U$ and target embeddings $V$, 
enabling computation of the information-sufficiency score via Eq.~\ref{eq:emir-is}.}
    \label{fig:framework}
\end{figure*}

\section{Method}
\label{sec:method}

To evaluate embedding quality without task-specific labels, we propose \methodname, which quantifies representation quality by measuring information sufficiency using normalizing flows.

\subsection{Problem Formulation}
\label{sec:problem}

Given an unlabeled corpus $\mathcal{X}$, we consider a set of candidate embedding models $\mathcal{E} = \{E_1, \dots, E_K\}$. Each model $E \in \mathcal{E}$ maps an input text $x \in \mathcal{X}$ to a high-dimensional representation:
\begin{equation}
    z = E(x) \in \mathbb{R}^d.
\end{equation}
To evaluate a specific model $E_a$, we pair it with a reference model $E_b$ ($b \ne a$). We denote the embedding being evaluated as the source $U$, and the reference embedding as the target $V$:
\begin{equation}
    U = E_a(x), \quad V = E_b(x).
\end{equation}
Our objective is to derive a task-agnostic score for $E_a$ based solely on these representations such that the resulting model ranking aligns with downstream supervised performance. 

\subsection{Information-Sufficiency Score}
\label{sec:is-score}

We build upon the information-sufficiency framework of \citep{darrin2024embedding}. The core intuition is that a high-quality source embedding $U$ should act as a sufficient representation of the semantic space, enabling the reconstruction of the target representations $V$. We formalize this via Information-Sufficiency ($I_s$), which measures the reduction in uncertainty of $V$ once $U$ is observed.

Let $\mathcal{F}$ be a family of marginal densities and $\mathcal{K}$ be a family of conditional densities. The directional $I_s$ score from $U$ to $V$ is defined as the difference between marginal and conditional entropy:
\begin{equation}
\label{eq:emir-is}
\begin{aligned}
\mathrm{I_s}(U \!\rightarrow\! V) &= \underbrace{\inf_{f \in \mathcal{F}} \mathbb{E}_{v}[-\log f(v)]}_{H(V)} \\
&\quad - \underbrace{\mathbb{E}_{u}\left[ \inf_{M \in \mathcal{K}} \mathbb{E}_{v \mid u}[-\log M(v \mid u)] \right]}_{H(V \mid U)}.
\end{aligned}
\end{equation}
To obtain a single quality score for model $E_a$, we compute the normalized median of its pairwise scores against all other models in the pool:
\begin{equation}
\label{eq:is_norm}
\mathrm{I_s}_{\mathrm{norm}}(E_a) = \operatorname{median}_{b \ne a} \frac{\mathrm{I_s}(U_a \!\rightarrow\! U_b)}{\dim(U_b)}.
\end{equation}
Normalization by the target dimension $\dim(U_b)$ is essential for comparability across reference models with varying output sizes, as raw entropy naturally scales with dimensionality.

\subsection{Normalizing-Flow Implementation}
\label{sec:method-flow}

We instantiate the density families in Eq.~\ref{eq:emir-is} using normalizing flows \citep{durkan2019neural}. Flows enable exact log-likelihood computation via invertible transformations, providing stable estimation.

\paragraph{Two-Stage Training.}
As shown in Figure~\ref{fig:framework}, we employ a progressive training strategy. We first train a marginal flow $p_\phi(v)$ on target embeddings to model distribution. Next, we initialize a conditional flow $p_\theta(v|u)$ by copying the marginal backbone weights. This warm-start strategy ensures the conditional model begins from a well-defined density baseline.

\paragraph{Low-Rank Conditioning.}
Standard conditional flows often use hypernetworks with $O(d^2)$ complexity, which is prohibitive for high-dimensional embeddings. We instead inject the source information $u$ through a parameter-efficient low-rank residual branch. Let $\mathbf{h}_{\text{base}}$ be the intermediate features of the target flow. The conditional feature is computed as:
\begin{equation}
    \mathbf{h}_{\text{cond}} = \mathbf{h}_{\text{base}} + B(A(u)),
\end{equation}
where $A \in \mathbb{R}^{r \times d}$ projects the $d$-dimensional source embedding to a low-rank bottleneck of dimension $r=64$, and $B$ maps from the bottleneck to the output space. 
This mechanism allows the source $u$ to adjust the transformation trajectory of the target $v$ with minimal parameter overhead.

\paragraph{Zero Initialization.}
We initialize $B$ to zeros and $A$ with random initialization. At the start of training, the conditioning term $B(A(u))$ is zero, making the conditional flow $p_\theta(v|u)$ initially equivalent to the pre-trained marginal flow $p_\phi(v)$. The dependence on the source $U$ is learned gradually, which prevents gradient instability and improves convergence speed.

\section{Theoretical Justification}
\label{sec:theory}

\paragraph{Motivation.}
In real-world deployment, embedding models must generalize to vast amounts of unseen data that extend far beyond the validation set. Purely empirical evaluation on a fixed dataset cannot theoretically guarantee reliability on the underlying population distribution.
To address this, we establish a theoretical framework relying on two pillars: the spectral stability of our flow architecture and the low-dimensional manifold hypothesis.

\paragraph{Core Assumptions.}
Our theory relies on two core assumptions (Appendix~\ref{app:ass}): a low intrinsic dimension (Assumption~\ref{ass:manifold-app}) to enable scaling to high-dimensional embeddings, and layer-wise approximate independence (Assumption~\ref{ass:indep-app}) to ensure stable gradient propagation. The latter, theoretically motivated by \citep{pmlr-v139-cohen21b}, is practically realized by our Zero-Initialization strategy, which ensures the network exhibits stable, linear growth rather than exponential instability.

\paragraph{Finite-Sample Generalization Bound.}
Building on this stability, we verify the reliability of \methodname\ by bounding the gap between the training and validation losses.

\begin{theorem}[Finite-sample generalization bound]
\label{thm:gen-bound}
Under Assumptions 1 and 2, for any fixed flow model $p_{\theta}$ and any $\delta\in(0,1)$, with probability at least $1-\delta$, the train-validation gap satisfies:
\begin{equation}
\label{eq:main-bound}
\begin{aligned}
    \bigl| \hat L_{\mathrm{val}}(\theta) - \hat L_{\mathrm{train}}(\theta) \bigr|
    &\le
    \frac{
      2 \tilde C_{\mathrm{Rad}}\,
      L\,\bar\sigma\,
      \sqrt{d_{\mathrm{eff}}}
    }{
      \sqrt{m}
    }
    \\
    &\quad
    +
    M_{\mathrm{val}}
    \sqrt{
      \frac{
        \log(2/\delta)
      }{
        2 m_{\mathrm{val}}
      }
    }
    \\
    &\quad
    +
    3 M_{\mathrm{train}}
    \sqrt{
      \frac{
        \log(2/\delta)
      }{
        2 m
      }
    } .
\end{aligned}
\end{equation}
\end{theorem}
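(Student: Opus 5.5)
The plan is to split the gap $\bigl|\hat L_{\mathrm{val}}(\theta)-\hat L_{\mathrm{train}}(\theta)\bigr|$ through the population risk $L(\theta)=\mathbb{E}[-\log p_\theta(v\mid u)]$ using the triangle inequality:
\[
\bigl|\hat L_{\mathrm{val}}-\hat L_{\mathrm{train}}\bigr|\le \bigl|\hat L_{\mathrm{val}}-L\bigr|+\bigl|L-\hat L_{\mathrm{train}}\bigr| .
\]
I will bound each term separately and combine them with a union bound, allotting failure probability $\delta/2$ to each event. This accounts for the $\log(2/\delta)$ factors in \eqref{eq:main-bound}.

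\textbf{Validation term.} The validation set is independent of training, so for the fixed $\theta$ the validation losses are i.i.d. I will assume they are bounded in an interval of length $M_{\mathrm{val}}$. Hoeffding's inequality then gives
\[
\bigl|\hat L_{\mathrm{val}}-L\bigr|\le M_{\mathrm{val}}\sqrt{\log(2/\delta)/(2m_{\mathrm{val}})}
\]
directly.

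\textbf{Training term.} Since $\theta$ is learned on the training data, I will handle this term uniformly over the flow class $\Theta$. The standard Rademacher bound, with McDiarmid's inequality applied to the supremum of the empirical deviation for a loss bounded by $M_{\mathrm{train}}$, gives
\[
\sup_{\theta}\bigl|L-\hat L_{\mathrm{train}}\bigr|\le 2\hat{\mathfrak{R}}_m(\ell\circ\Theta)+3M_{\mathrm{train}}\sqrt{\log(2/\delta)/(2m)} .
\]
The constant $3$ comes from using the empirical Rademacher complexity in place of the expected one, which costs one additional McDiarmid deviation.

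\textbf{Bounding the Rademacher complexity.} This is where the two assumptions enter, and it is the main obstacle. The steps are:
\begin{enumerate}
\item Treat the negative log-likelihood as the composition of the base log-density with the log-Jacobian term, and assume it is $L$-Lipschitz in the flow output.
\item Apply Talagrand's contraction lemma (or its vector-valued Maurer version) to reduce to the complexity of the flow map itself.
\item Use Assumption~\ref{ass:indep-app}, realized through zero-initialization, to control the layers. Per-layer spectral norms are bounded by $\bar\sigma$. Approximate independence should make the layer-wise contributions add, giving linear rather than multiplicative growth; this growth is absorbed into $\tilde C_{\mathrm{Rad}}\bar\sigma$.
\item Use Assumption~\ref{ass:manifold-app} to replace the ambient dimension $d$ by $d_{\mathrm{eff}}$. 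Because the embeddings lie near a $d_{\mathrm{eff}}$-dimensional manifold, the linear first layer acts on inputs whose covariance is effectively rank $d_{\mathrm{eff}}$. The usual trace bound $\sqrt{\mathbb{E}\|Wx\|^2}$ then scales with $\sqrt{d_{\mathrm{eff}}}$ rather than $\sqrt d$.
\end{enumerate}
Together these give
\[
\hat{\mathfrak{R}}_m\le \tilde C_{\mathrm{Rad}}L\bar\sigma\sqrt{d_{\mathrm{eff}}}/\sqrt m .
\]

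The hardest step is making step 3 rigorous. Nonlinear compositions normally yield products of norms, so I would first try a peeling argument. Assumption~\ref{ass:indep-app} would let the cross terms in $\mathbb{E}\bigl\|\sum_i\epsilon_i f_\theta(x_i)\bigr\|^2$ be bounded as near-orthogonal. Any residual depth dependence would be absorbed into $\tilde C_{\mathrm{Rad}}$.

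Summing the three bounds under the union bound yields \eqref{eq:main-bound}.
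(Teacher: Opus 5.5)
Your outer skeleton is the paper's: the triangle inequality through $L(\theta)$, Hoeffding on the validation sample for the fixed $\theta$, the Bartlett--Mendelson bound $2\mathcal{R}_m(\mathcal{L})+3M_{\mathrm{train}}\sqrt{\log(2/\delta)/(2m)}$ on the training side, and a union bound.

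There is one small slip in that part. The $2$ in $\log(2/\delta)$ comes from the two tails in Hoeffding and from the two McDiarmid steps in the Rademacher bound. So allotting $\delta/2$ to each event gives $\log(4/\delta)$, not $\log(2/\delta)$. The paper instead applies each bound at level $\delta$ and invokes the union bound loosely.

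\textbf{Reading of $L\bar\sigma$ and the depth bound.} The genuine gap is the Rademacher step, starting with how you read $L\bar\sigma$. In the statement, $L$ is the number of blocks and $\bar\sigma=\frac1L\sum_{\ell}\mathbb{E}\|\Delta_\ell\|_2$ is the mean spectral norm of the deviations $\Delta_\ell=J_\ell-I$. You instead treat $L$ as a Lipschitz constant of the NLL and $\bar\sigma$ as a bound on each layer's spectral norm. With per-layer norms bounded by $\bar\sigma$, the composition is only controlled by $\bar\sigma^{L}$. Absorbing the depth growth, or ``any residual depth dependence'', into $\tilde C_{\mathrm{Rad}}$ makes that constant depth-dependent, so the displayed bound is not what you obtain.

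The missing idea is Lemma~\ref{lem:arch-stab-full}. Under zero-initialization, expand $J_{\mathrm{tot}}=\prod_{\ell}(I+\Delta_\ell)\approx I+\sum_\ell\Delta_\ell$ to first order and apply the triangle inequality, which gives $\mathbb{E}[\operatorname{Lip}(T_\theta)]\le 1+L\bar\sigma$. This Lipschitz constant then replaces $L_f$ in the manifold lemma, with the identity term folded into $\tilde C_{\mathrm{Rad}}$.

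Your peeling argument is left open, and peeling naturally produces products of per-layer norms. The contraction step also cannot reduce the loss class to $\{T_\theta\}$, because $-\log\lvert\det J_{T_\theta}(v)\rvert$ is not a function of the output $T_\theta(v)$.

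\textbf{Step 4 and the $\sqrt{d_{\mathrm{eff}}}$ factor.} Step 4 fails as stated. Assumption~\ref{ass:manifold-app} is a metric statement (compact support of intrinsic dimension $d_{\mathrm{eff}}$ with bounded diameter), not a linear one. It does not make the covariance effectively rank $d_{\mathrm{eff}}$: the curve $t\mapsto(\cos t,\sin t,\dots,\cos kt,\sin kt)$ with $t$ uniform on $[0,2\pi)$ is one-dimensional but has covariance $\tfrac12 I_{2k}$. Moreover, for a norm-bounded linear layer the trace bound depends only on $\sum_i\|v_i\|^2$ and is already dimension-free. So it is not the mechanism that produces $\sqrt{d_{\mathrm{eff}}}$.

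The paper gets that factor from metric entropy in Lemma~\ref{lem:manifold-rad-full}. It uses $\mathcal{N}(\mathcal{M},\varepsilon)\le C_0(D/\varepsilon)^{d_{\mathrm{eff}}}$ and transfers this to the function class at scale $\varepsilon/L_f$. It then inserts the result into Dudley's entropy integral, where $\int_0^D\sqrt{\log(D/t)}\,dt=D\sqrt{\pi}/2$ gives $C_{\mathrm{Rad}}=6\sqrt{\pi}$.

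The paper's own version of this step is heuristic too, relying on a first-order expansion and a Lipschitz-in-$v$ substitution, but it uses the two assumptions in the form they are stated. To match the theorem, replace your steps 1--4 with these two lemmas.
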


\paragraph{Interpretation.}
The bound scales with the intrinsic dimension $d_{\text{eff}}$ rather than the raw embedding dimension $d$. Because $d_{\text{eff}}\ll d$ for semantic representations, reliable evaluation is possible on high-dimensional embeddings with moderate sample sizes. The error also depends linearly on depth $L$ and spectral stability $\bar{\sigma}$, both kept small by our zero-initialization strategy.

\section{Experiments}
We design our experiments to evaluate the empirical utility of the Flow-based estimator across three primary dimensions:
\textbf{(Q1)} its reliability in predicting model rankings across diverse downstream tasks; 
\textbf{(Q2)} its comparative performance against kernel-based baselines in high-dimensional embedding spaces; 
and \textbf{(Q3)} the alignment between empirical observations and our theoretical generalization guarantees.

\subsection{Embedders and Datasets}

\paragraph{Embedding Models.}
Our evaluation encompasses eight representative embedding models, covering a broad spectrum of architectures and dimensionalities. We focus specifically on high-dimensional space, including BGE-Multilingual-Gemma2 ($d=3{,}584$) \citep{bge-m3}, gte-Qwen2-7B-instruct ($d=3{,}584$) \citep{li2023towards}, and four Mistral-7B-based models ($d=4{,}096$): Zeta-Alpha-E5-Mistral \footnote{\url{https://huggingface.co/zeta-alpha-ai/Zeta-Alpha-E5-Mistral}}, GritLM-7B \citep{muennighoff2024generative}, SFR-Embedding-Mistral \citep{SFRAIResearch2024}, and Linq-Embed-Mistral \citep{LinqAIResearch2024}. For lower-dimensional benchmarks, we include stella-base-en-v2 \footnote{\url{https://huggingface.co/infgrad/stella-base-en-v2}} ($d=768$) and all-MiniLM-L6-v2 \citep{reimers-gurevych-2019-sentence} ($d=384$).

\paragraph{Dataset Selection and Task Categorization.}
To approximate the realistic setting in which models are evaluated on novel internal corpora, we deviate from standard benchmarks such as MTEB. Public benchmarks routinely leak into pre-training data and inflate reported scores, so we apply a temporal filter and select 11 Hugging Face datasets released \textit{after} the training cutoff of every candidate model. \methodname\ itself is unsupervised; ground-truth labels are used only as an oracle to validate the predicted rankings. Dataset statistics are reported in Appendix~\ref{app:detail}. The 11 datasets cover four task categories:
\begin{itemize}
    \item \textbf{Classification:} Apt-eval \citep{saha-feizi-2025-almost} (safety/robustness), GT-FintechLab \citep{shah2025wordsuniteworldunified} (finance), and BhashaBench-Finance \citep{devane2025bhashabenchv1comprehensivebenchmark} (multilingual finance).
    \item \textbf{Retrieval:} AIR-Bench-Finance \citep{chen2024airbench}, LIMIT \citep{weller2025theoreticallimit} (instruction-following), and ArXiv-Abstracts 2025 \footnote{\url{https://huggingface.co/datasets/almanach/arxiv_abstracts_2025}} (scientific literature). For retrieval tasks, we embed the passage corpus only (not queries) to simulate the label-free setting.
    \item \textbf{Semantic Textual Similarity (STS):} Augmented STS-B \footnote{\url{https://huggingface.co/datasets/maiammar/Augmented_stsb_multi_mt}}, LivNLP-STS \citep{zhang2025annotatingtrainingdataconditional}, and Philosophical-STS \footnote{\url{https://huggingface.co/datasets/johnnyboycurtis/Philosophical-STS-Text-Pairs}}.
    \item \textbf{Clustering:} Clustered-FunPang Medical\footnote{\url{https://huggingface.co/datasets/mukulb/clustered_FUNPANG_dataset_with_groups}}, and Reasoning-Clustering \footnote{\url{https://huggingface.co/datasets/Ibisbill/Clustering_deduplicated_reasoning}}.
\end{itemize}

\paragraph{Evaluation Protocol.}
We assess the reliability of our unsupervised Information Sufficiency (IS) metric by measuring its alignment with ground-truth supervised rankings. Ground-truth performance is established using standard MTEB metrics \citep{muennighoff-etal-2023-mteb}: F1 macro for classification, nDCG@10 for retrieval, Spearman correlation for STS, and V-measure for clustering. We quantify ranking alignment using Spearman’s rank correlation ($\rho$) and Pearson correlation ($r$) between the predicted IS scores and the supervised metrics.

\paragraph{Baselines.} 
We benchmark \methodname\ against existing unsupervised metrics, explicitly categorized into two types of unsupervised evaluation methods: geometric-based and information-theoretic. 
For the geometric-based methods, we considered (1) \textbf{Uniformity}~\citep{wang2020understanding} and (2) \textbf{IsoScore}~\citep{rudman2022isoscore}; 
and (3) \textbf{Silhouette Score}~\citep{rousseeuw1987silhouettes}.
For the information-theoretic methods, we considered (4) \textbf{EMIR}~\citep{darrin2024embedding}.
Our work aligns with the information-theoretic evaluation (specifically the framework established by EMIR), as we share the fundamental objective of quantifying representation quality via information retention. However, we diverge by employing normalizing flows to robustly estimate densities in high-dimensional spaces where the GMMs used in EMIR may fail.

\subsection{Main Results}
\label{subsec:results}


\begin{table*}[t]
\centering
\renewcommand{\arraystretch}{1.2}
\resizebox{\textwidth}{!}{%
    \begin{tabular}{l | ccccc | ccccc}
    \toprule
    & \multicolumn{5}{c|}{\textbf{Spearman's $\rho$}} & \multicolumn{5}{c}{\textbf{Pearson's $r$}} \\
    \cmidrule(lr){2-6} \cmidrule(lr){7-11}
    \textbf{Task} & \textbf{Uni.} & \textbf{Iso.} & \textbf{Sil.} & \textbf{EMIR} & \textbf{Ours} & \textbf{Uni.} & \textbf{Iso.} & \textbf{Sil.} & \textbf{EMIR} & \textbf{Ours} \\
    \midrule
    Class. & 0.18 & -0.40 & -0.06 & -0.06 & \textbf{0.56} & \textbf{0.35} & -0.58 & -0.08 & -0.20 & 0.31 \\
    STS & 0.01 & -0.33 & 0.56 & -0.06 & \textbf{0.70} & \textbf{0.70} & -0.14 & 0.53 & -0.08 & \textbf{0.70} \\
    Retr. & 0.08 & -0.45 & -0.03 & -0.22 & \textbf{0.72} & 0.43 & -0.37 & 0.39 & -0.18 & \textbf{0.64} \\
    Clust. & -0.14 & 0.29 & -0.24 & -0.16 & \textbf{0.83} & -0.43 & 0.05 & -0.25 & -0.10 & \textbf{0.69} \\
    \midrule
    \textbf{Avg} & 0.05 & -0.27 & 0.08 & -0.12 & \textbf{0.69} & 0.33 & -0.29 & 0.18 & -0.14 & \textbf{0.58} \\
    \bottomrule
    \end{tabular}%
}
\caption{Task-aggregated Comparison with Unsupervised Baselines. }
\label{tab:aggregated_results}
\end{table*}

Table~\ref{tab:aggregated_results} summarizes the ranking correlations across eleven representative datasets, revealing three key findings. Detailed per-dataset results are shown in Appendix~\ref{app:detail}.

\paragraph{Flow-Based Estimation Succeeds Where Kernel Methods Fail.}
\methodname\ achieves an average Spearman correlation of $\rho=0.70$, outperforming all unsupervised baselines. EMIR, which shares our information-sufficiency framework but relies on Gaussian Mixture Model (GMM) density estimation, yields a negative average correlation ($\rho=-0.12$), i.e.\ systematically inverted rankings: in high-dimensional spaces ($d\ge 3{,}584$) kernel densities become vanishingly sparse and distance-based estimates unreliable. Our flow-based approach replaces these estimates with an explicit parametric density that adapts to the intrinsic manifold, and maintains positive correlations across all task categories.

\paragraph{Geometric Baselines Exhibit Inconsistent Performance.}
Table~\ref{tab:aggregated_results} reveals that existing unsupervised metrics struggle to maintain consistent correlations across task types.
Uniformity achieves near-zero average correlation ($\rho=0.05$), suggesting that embedding space uniformity alone is not predictive of downstream quality.
IsoScore exhibits negative correlations across nearly all categories (Avg $\rho=-0.27$), as it penalizes anisotropy under the assumption that uniformity maximizes information capacity. However, high-quality semantic spaces are inherently anisotropic—meaningful concepts naturally form dense, non-uniform clusters on the manifold rather than populating the hypersphere uniformly.
Silhouette shows task-specific success only on STS ($\rho=0.56$) but fails elsewhere, indicating that geometric cohesion does not generalize as a universal quality indicator.
As Figure~\ref{fig:stability} shows, these rigid geometric assumptions produce high variance and frequent ranking inversions. The information-theoretic objective behind \methodname\ instead adapts to the intrinsic data density rather than imposing a target shape on it, enabling \methodname\ to remain aligned with ground truth across heterogeneous task families.
\begin{figure}[htbp]
    \centering
    \includegraphics[width=\linewidth]{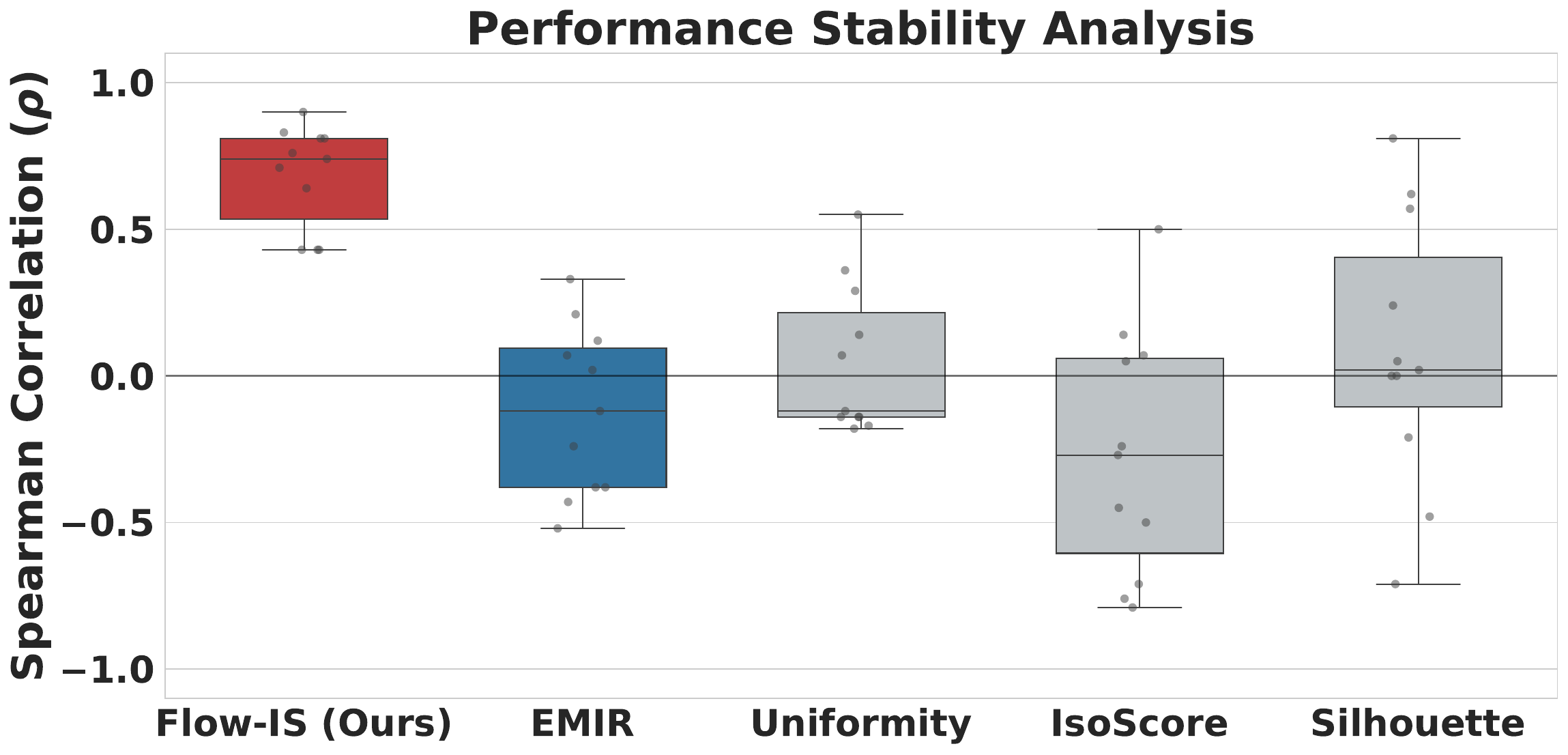}
    \caption{\textbf{Stability Analysis.} Distribution of Spearman correlations across all datasets. Geometric baselines (grey) and kernel-based EMIR (blue) exhibit high variance and frequent negative correlations, while \methodname\ (red) stays positively aligned with ground truth on every dataset.}
    \label{fig:stability}
\end{figure}

\paragraph{Consistent Performance Across Task Families.}
A useful label-free metric must rank embedders across heterogeneous downstream tasks, not just on a single family. Existing baselines fail this requirement: each one has at least one task family on which it correlates negatively with ground truth (Table~\ref{tab:aggregated_results}). \methodname\ correlates positively on every family, with $\rho=0.83$ on Clustering, $\rho=0.72$ on Retrieval, $\rho=0.70$ on STS, and $\rho=0.56$ on Classification, and the per-dataset breakdown in Table~\ref{tab:main_results} (Appendix~\ref{app:detail}) contains no negative entries.
The signal is strongest on STS, Retrieval, and Clustering, which score embeddings directly through similarity in the embedding space. Classification involves an additional supervised predictor that can exploit features beyond the distributional structure captured by an unsupervised metric, which caps the achievable correlation.

\paragraph{Ranking Stability under Subsampling.}
We probe sample-size sensitivity by recomputing $I_s$ on subsamples of the validation set at ratios $\alpha\in\{0.05, 0.1, \dots, 1.0\}$ and measuring the deviation $\Delta_\rho(\alpha){=}|\rho(\alpha){-}\rho(1.0)|$ from the full-data ranking.
For 8 of the 11 datasets, $\Delta_\rho<0.05$ even at $\alpha{=}0.2$; the remaining three datasets are STS corpora where rank-based metrics are inherently more sample-sensitive, and even there $\Delta_\rho<0.07$ once $\alpha{\ge}0.2$.
The flow therefore recovers the global manifold from far fewer observations than non-parametric estimators require, suggesting that 20–40\% of the validation set is sufficient for reliable ranking on small, specialized corpora.
Per-dataset stability curves are reported in Appendix~\ref{app:ranking-stability}.

\subsection{Generalization Bound Analysis}
\label{subsec:bound_analysis}


\begin{table}[t]
\centering
\small
\renewcommand{\arraystretch}{1.1}
\setlength{\tabcolsep}{10pt}
\begin{tabular}{l c c}
\toprule
\textbf{Task Type} & \textbf{Bound Ratio} & \textbf{Rademacher \%} \\
\midrule
Classification & $11.0\times$ & $92.4\%$ \\
STS & $21.1\times$ & $94.5\%$ \\
Retrieval & $21.2\times$ & $95.5\%$ \\
Clustering & $18.4\times$ & $98.5\%$ \\
\midrule
\textbf{Average} & \textbf{17.9$\times$} & \textbf{95.2\%} \\
\bottomrule
\end{tabular}
\caption{\textbf{Theoretical validation.} Bound Ratio ($\Delta_{\mathrm{theo}}/\Delta_{\mathrm{emp}}$) and Rademacher complexity contribution, grouped by task type.}
\label{tab:theoretical_compact}
\end{table}

To empirically validate the guarantee provided by Theorem~\ref{thm:gen-bound}, we compare the derived theoretical bound against the observed empirical generalization gap. This validation is crucial for addressing the practical challenges of enterprise deployment, ensuring that the evaluation method generalizes reliably as new data continuously arrives.

\paragraph{Setup.}
Since the Information Sufficiency estimator decomposes into marginal and conditional components, its total estimation error is bounded by the sum of their respective generalization gaps. 
We therefore compute the empirical gap as the sum of the absolute differences between training and validation Negative Log-Likelihoods (NLL) for both flows, and compare this empirical quantity against the theoretical bound derived from our model architecture and sample complexity.

\paragraph{Analysis.}
As reported in Table~\ref{tab:theoretical_compact}, the theoretical bound $\Delta_{\mathrm{theo}}$ upper-bounds the empirical gap $\Delta_{\mathrm{emp}}$ by a margin ranging from $11.0\times$ to $21.2\times$ across task types (average ratio $17.9\times$).
This confirms that our conservative linear bound remains informative, providing meaningful generalization guarantees without being vacuously loose.
The Rademacher complexity term accounts for $92.4\%$ to $98.5\%$ of the total bound (average $95.2\%$), which aligns with its dependence on the effective intrinsic dimension $d_{\mathrm{eff}}$. As expected, higher-dimensional embedding spaces incur larger complexity penalties.
Notably, classification tasks exhibit tighter bounds ($11.0\times$) compared to retrieval and STS tasks ($\sim\!21\times$), a discrepancy that reflects the richer representational capacity required for fine-grained semantic matching in the latter.
Practically, these findings certify that \methodname\ provides a trustworthy and theoretically grounded signal on unseen data.


\begin{figure*}[htbp]
\centering
\includegraphics[width=\linewidth]{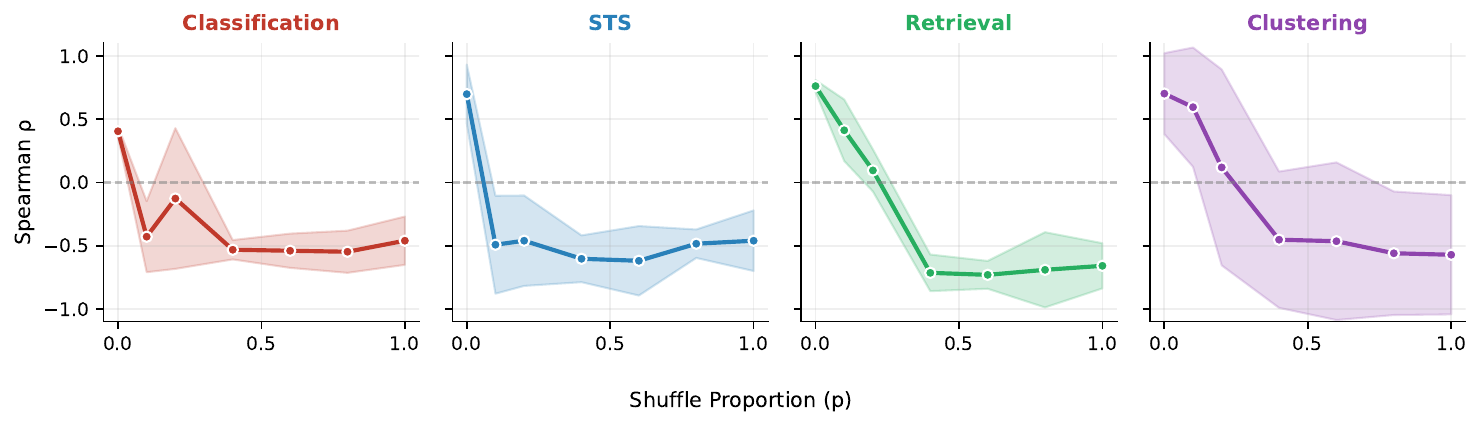}
\caption{Partial shuffle ablation by task type. Increasing the shuffle proportion $p$ causes correlation to degrade from positive ($p=0$) to negative ($p=1$), confirming that the metric relies on correct alignment.}
\label{fig:partial_shuffle}
\end{figure*}

\subsection{Ablation Study}
\label{subsec:ablation}

\paragraph{Shuffle ablation.}
We partially shuffle the correspondence between source embeddings $U$ and target embeddings $V$ at a ratio $p \in [0, 1]$ while keeping the remaining pairs unchanged. 
As visualized in Figure~\ref{fig:partial_shuffle} and detailed in Appendix~\ref{sec:shuffle}, the Spearman correlation with downstream performance degrades significantly as $p$ increases, transitioning from positive to negative around $p=0.2$ to $0.4$. 
The degradation pattern varies across task types: Retrieval and Clustering exhibit a gradual decline, whereas Classification and STS show sharper transitions at lower shuffle ratios. 
At full shuffle ($p=1.0$), all categories converge to negative correlations (averaging $\rho \approx -0.4$). 
This confirms that our metric relies heavily on the correct \textbf{alignment between source and target embeddings} rather than marginal statistics, and the varying sensitivity across tasks may reflect differences in the underlying embedding structure.


\paragraph{Conditional-only ablation.}
We investigate the contribution of the marginal term by comparing the full metric against a conditional-only variant, defined as $I_{\text{cond}}(u,v)=\log p_{\phi}(v\mid u)$. 
While this variant may be competitive when the target marginal distribution varies little, it lacks consistent reliability across diverse tasks. 
As quantified in Table~\ref{tab:ablation_cond_only} and visualized in Figure~\ref{fig:cond_only} (see Appendix~\ref{sec:ablation_marginal}), removing the marginal term $\mathbb{E}[\log p_\phi(v)]$ reduces the average Spearman correlation substantially from 0.70 to 0.21. 
Both components are therefore necessary: the conditional term captures the cross-model mapping, the marginal term the intrinsic structure of the target space. We use the full score as default.

\paragraph{Aggregation strategy.}
We aggregate the $N{-}1$ per-pair $I_s$ scores of each source model into a single number using the median. The per-pair distribution is heavy-tailed: a few targets with very different geometry (e.g.\ tokenizer or pooling mismatches) inflate the mean and add noise unrelated to the source model's quality. Table~\ref{tab:ablation_aggregation} compares the mean, the median, and a $20\%$ trimmed mean, and the median attains the highest or near-highest Spearman correlation with ground truth on every task family. The trimmed mean closes most of this gap but introduces an extra hyperparameter, so we keep the median as the default.

\subsection{Additional Analysis}
\label{subsec:additional_analysis}

\paragraph{Assumption validation.}
Our theoretical guarantees in Theorem~\ref{thm:gen-bound} are established under Assumption~\ref{ass:indep-app}, which requires that the Jacobians of successive coupling layers act along approximately orthogonal directions. Under this condition, the overall Lipschitz constant grows approximately linearly with depth rather than exponentially (Lemma~\ref{lem:arch-stab-full}, Eq.~\ref{eq:lip-bound-full}). Although this assumption is idealized, it is supported by empirical evidence from three validation experiments; full results are reported in Appendix~\ref{app:assumption_validation}.
First, the singular vectors of inter-layer Jacobians exhibit a mean cosine similarity of $0.010$, which is below the random baseline $1/\sqrt{d}=0.016$, indicating near-orthogonality between layers. This behavior is consistent with the alternating binary masks and random permutations used in the NSF architecture. Second, the average per-layer spectral amplification is $1.049$, showing that each layer contributes only a small and controlled perturbation, as encouraged by our zero-initialization strategy. Third, the cumulative perturbation amplification across all 18 atomic transforms in the composite flow (the 6 coupling layers, each followed by ActNorm and a random permutation) is only $2.38\times$, which is consistent with the approximately linear Lipschitz growth implied by Assumption~\ref{ass:indep-app}.
Overall, these results indicate that the structural regime required by Assumption~\ref{ass:indep-app} is well matched to the behavior of the trained models in practice.

\paragraph{Top-3 model identification.}
For practical model selection, the top of the ranking matters more than the tail: a practitioner mostly needs to know which two or three candidates to fine-tune or deploy, not the relative order of the weakest ones. We therefore compare the top-3 models ranked by $I_s$ against the supervised top-3 on each of the 11 datasets. \methodname\ recovers the exact top-3 set on $4/11$ datasets and matches at least $2$ out of $3$ on $10/11$ datasets (the per-dataset breakdown is given in Appendix~\ref{app:top3}). The single dataset on which only $1/3$ models agree is BhashaBench, where the top three supervised scores are all within one accuracy point and the ordering is essentially noise. In other words, whenever the supervised signal cleanly separates a leading group, \methodname\ identifies that group on its own, so the estimator is precise enough to drive model selection on an unlabeled corpus without any task-specific supervision.

\paragraph{Statistical robustness.}
To quantify ranking stability under a small pool ($N{=}8$), we run a leave-one-out bootstrap on the per-dataset $8\times8$ IS matrix. As reported in Table~\ref{tab:bootstrap_ci}, $\rho_{\min}$ remains positive on every dataset, and the resampled Spearman $\rho$ aggregates to $[0.583, 0.843]$, so the positive-correlation finding does not hinge on the inclusion of any particular model (full protocol in Appendix~\ref{app:bootstrap_ci}). As a complementary test, converting rankings into $11\times\binom{8}{2}{=}308$ pairwise preferences, \methodname\ matches the supervised ordering on $73.1\%$ of pairs ($225/308$; binomial $p{=}1.37\times 10^{-16}$, 95\% CI lower bound $68.6\%$), well above the $50\%$ chance baseline. Most residual disagreement comes from pairs with near-identical downstream performance, while the $90.9\%$ Top-3 overlap confirms that \methodname\ reliably separates strong from weak models, the primary goal of model selection.

\begin{table}[htbp]
\centering
\renewcommand{\arraystretch}{1.05}
\setlength{\tabcolsep}{6pt}
\begin{tabular}{l c c}
\toprule
\textbf{Dataset} & \textbf{Spearman $\rho$} & \textbf{Bootstrap range} \\
\midrule
FunPang        & 0.905 & [0.857, 0.964] \\
Aug-STSB       & 0.833 & [0.750, 0.964] \\
LivNLP-STS     & 0.833 & [0.750, 0.964] \\
LIMIT          & 0.810 & [0.714, 0.893] \\
BhashaBench    & 0.802 & [0.739, 0.919] \\
arXiv '25      & 0.762 & [0.643, 0.857] \\
Reasoning      & 0.762 & [0.643, 0.857] \\
AIR-Bench      & 0.714 & [0.607, 0.857] \\
apt-eval       & 0.429 & [0.321, 0.750] \\
gtfintechlab   & 0.429 & [0.143, 0.607] \\
Philo-STS      & 0.429 & [0.250, 0.643] \\
\midrule
\textbf{Average} & \textbf{0.701} & \textbf{[0.583, 0.843]} \\
\bottomrule
\end{tabular}
\caption{Leave-one-out bootstrap over the 8-model pool. ``Spearman $\rho$'' is the full 8-model estimate (matching Table~\ref{tab:main_results}); ``Bootstrap range'' is $[\rho_{\min}, \rho_{\max}]$ across the 8 resampled replicates obtained by recomputing $\rho$ on each 7-model subset. The lower bound stays positive on every dataset.}
\label{tab:bootstrap_ci}
\end{table}

\paragraph{Training stability.}
To test whether \methodname\ relies on a brittle optimum, we add Gaussian noise to each parameter tensor at $\sigma{=}1{-}20\%$ of its mean absolute value and re-evaluate the held-out NLL. Across the 616 conditional flows trained for the 11 datasets, the median relative NLL change is below $0.03\%$ at $\sigma{\le}5\%$ and 1.22\% at $\sigma{=}20\%$; the median–mean gap indicates that the few sensitive pairs are outliers rather than the rule. The trained flows therefore lie in flat basins, and the resulting rankings are not driven by training randomness (Appendix~\ref{app:perturbation}).

\section{Conclusion}
We presented \methodname, a label-free evaluator of text embedding models that estimates information sufficiency with normalizing flows. The framework replaces the kernel-density backbone of prior information-theoretic estimators with an exact log-likelihood from a normalizing flow, removing the dimensional sparsity that destabilises kernel methods on modern LLM-based embeddings.
Finite-sample bounds derived via Rademacher complexity show that the estimation error scales with the intrinsic dimension of the data manifold rather than the raw embedding dimension, accounting for the estimator's behaviour in high-dimensional regimes.
Across 11 datasets and eight embedders, \methodname\ correlates positively with supervised rankings on every task family (average Spearman $\rho{=}0.70$, up to $0.90$), agrees with supervised pairwise preferences on $73.1\%$ of pairs, and recovers $\geq 2/3$ of the supervised top-3 on $10/11$ datasets. The induced ranking is also stable: the leave-one-out bootstrap keeps $\rho_{\min}>0$ on every dataset, and 20--40\% of the validation data is sufficient to reproduce the full-data ranking. Together, these properties make \methodname\ a practical alternative to annotated benchmarks for model selection on unlabeled corpora.

\paragraph{Practitioner's guide.}
The intended workflow is label-free model selection: a practitioner curates an unlabeled target corpus, picks a small pool of $N$ candidate embedders, computes pairwise $I_s$ on the corpus, and selects the top-scoring candidates. No downstream task or annotation is required. $I_s$ is a relative quantity, comparable within a single dataset and candidate pool but not across datasets; the per-dimension normalization in Eq.~\ref{eq:is_norm} places encoders of differing widths on a common scale. The induced ranking is reliable when the per-dimension gap between adjacent candidates exceeds the leave-one-out bootstrap range (Appendix~\ref{app:bootstrap_ci}); for tightly clustered candidates, the Top-3 overlap is a more robust summary than the full Spearman $\rho$. When a candidate exhibits an unexpectedly low score, the decomposition $I_s = H(V) - H(V\mid U)$ both localises the cause and indicates a remedy. A low marginal $H(V)$ reflects a poorly structured target density (e.g., collapsed or anisotropic), and motivates either substituting the candidate with a higher-capacity encoder or applying anisotropy-reducing post-processing such as whitening. A low conditional $H(V\mid U)$ accompanied by a healthy marginal indicates weak cross-model transferability, for which lightweight alignment such as a learned linear projection or distillation from a stronger reference is typically sufficient. Before attributing a depressed score to embedding quality, however, one should verify that the marginal flow's validation NLL has converged, as under-fitting biases $I_s$ downward for reasons unrelated to representation quality.

\section*{Limitations}

\methodname\ has three main limitations. 
First, as a learning-based evaluation method, \methodname\ incurs non-trivial training cost: one conditional flow on Reasoning ($d{=}3{,}584$) takes ${\sim}1.33$ GPU hours on a V100, and the full $N{=}8$ pairwise evaluation costs ${\sim}74.5$ GPU hours. Caching the marginal flow makes adding a new model $O(N)$ (${\sim}10$ GPU hours) rather than $O(N^2)$, and we further mitigate inference latency by using discrete normalizing flows instead of continuous-time approaches like flow matching; however, training a generative model remains a bottleneck for resource-constrained scenarios. Training-free baselines such as KDE-based EMIR cost only ${\sim}6$ GPU hours, though they degrade in high-dimensional spaces ($d{\ge}3{,}584$) where density estimates can collapse to a single kernel component.

Second, we observe a performance discrepancy across task types. 
While \methodname\ excels on geometry-centric tasks like STS and Retrieval, its correlation is lower for Classification ($\rho=0.56$). 
This suggests that global information sufficiency captures the overall semantic manifold but may miss fine-grained, class-specific features required for linear separability.

Third, our theoretical analysis relies on simplifying assumptions. Specifically, regarding Assumption~\ref{ass:indep-app}, parameter coupling during end-to-end backpropagation inevitably introduces inter-layer correlations, making the approximate independence assumption an idealization that is computationally intractable to verify in high dimensions.

\bibliography{custom}

\appendix

\section{Theoretical Assumptions and Proofs}
\label{Theoretical}

\subsection{Problem setup}

Let $\mathcal{D}$ be an unknown distribution on $\mathbb{R}^d$. We observe an i.i.d.\ training sample
\begin{equation}
  S_{\mathrm{train}}
  =
  \{v^{\mathrm{train}}_1,\dots,v^{\mathrm{train}}_{m}\}
  \sim \mathcal{D}^m .
\end{equation}
A normalizing flow $T_\theta$ induces a density
\begin{equation}
  p_\theta(v)
  =
  p_Z\bigl(T_\theta(v)\bigr)\,
  \bigl|
    \det J_{T_\theta}(v)
  \bigr| ,
\end{equation}
where $p_Z$ is a fixed base density and $J_{T_\theta}(v)$ is the Jacobian of the flow transformation.
We use the negative log-likelihood loss $\ell_\theta(v) = - \log p_\theta(v)$.
The empirical training risk is
\begin{equation}
  \hat L_{\mathrm{train}}(\theta)
  =
  \frac{1}{m}
  \sum_{i=1}^{m}
  \ell_\theta\bigl(v^{\mathrm{train}}_i\bigr) .
\end{equation}
The population risk is
\begin{equation}
  L(\theta)
  =
  \mathbb{E}_{v\sim\mathcal{D}}
  \bigl[
    \ell_\theta(v)
  \bigr] .
\end{equation}
Given a validation set
\begin{equation}
  S_{\mathrm{val}}
  =
  \{v^{\mathrm{val}}_1,\dots,v^{\mathrm{val}}_{m_{\mathrm{val}}}\}
  \sim \mathcal{D}^{m_{\mathrm{val}}},
\end{equation}
the empirical validation risk is
\begin{equation}
  \hat L_{\mathrm{val}}(\theta)
  =
  \frac{1}{m_{\mathrm{val}}}
  \sum_{j=1}^{m_{\mathrm{val}}}
  \ell_\theta\bigl(v^{\mathrm{val}}_j\bigr) .
\end{equation}
We aim to bound the generalization gap between the validation risk (observable proxy for performance) and the training risk (optimization objective):
\begin{equation}
  \Delta(\theta)
  =
  \bigl|
    \hat L_{\mathrm{val}}(\theta) - \hat L_{\mathrm{train}}(\theta)
  \bigr|.
\end{equation}
Note that bounding this gap involves controlling the deviation of both empirical risks from the population risk $L(\theta)$.

We first consider a marginal flow for a fixed embedder, then apply the same argument to a conditional flow.
The Information Sufficiency ($I_s$) score is defined as
\begin{equation}
  IS
  =
  L_{\mathrm{marg}}(V)
  -
  L_{\mathrm{cond}}(V \mid U) ,
\end{equation}
and the empirical $\widehat{IS}$ score replaces the population risks with their validation counterparts.
Establishing a finite-sample bound on the generalization gaps of the marginal and conditional flows directly yields a bound for the estimation error of the $I_s$ score.

\subsection{Core assumptions}
\label{app:ass}

\begin{assumption}[Low intrinsic dimension]
\label{ass:manifold-app}
The embedding distribution is supported on a compact subset $\mathcal{M} \subset \mathbb{R}^d$ with intrinsic dimension $d_{\mathrm{eff}} \ll d$ and bounded diameter.
This type of low-dimensional structure for learned representations is consistent with empirical measurements of intrinsic dimensionality in deep features \citep{ansuini2019intrinsic}.
\end{assumption}

\begin{assumption}[Approximate layer independence]
\label{ass:indep-app}
The flow $T_\theta$ is a composition of $L$ invertible blocks with \textbf{perturbation rank} at most $r$. 
Assuming a regime of near-identity mappings (e.g., via residual connections or zero-initialization), the Jacobians of different blocks are approximately independent in their dominant singular directions. 
Consequently, the overall Lipschitz behaviour of the composition \textbf{grows linearly} in depth rather than exhibiting exponential growth. 
This assumption is consistent with analyses of signal propagation and dynamical isometry in deep architectures~\citep{pmlr-v139-cohen21b}.
\end{assumption}

\paragraph{Remark: Validity via Zero-Initialization.} 
While assuming approximate independence between layer Jacobians is non-trivial for generic deep networks, our Zero-Initialization strategy (Section~\ref{sec:method-flow}) provides a rigorous structural justification. 
Specifically:
(1) \textbf{Identity Start}: By initializing the conditioner's projection matrix $B$ to zero, the conditional flow starts as an identity transformation relative to the backbone ($T_{\theta} = T_{\phi}$), ensuring that initial layer-wise Jacobians satisfy $J_l = I$.
(2) \textbf{Linear Accumulation}: During progressive training, the deviation $\Delta_l$ from identity (where $J_l = I + \Delta_l$) is explicitly constrained via $L_2$ regularization. In this small-perturbation regime, the norm of the composed Jacobian follows a first-order approximation $\|\prod_{l=1}^L (I + \Delta_l)\| \approx \|I + \sum_{l=1}^L \Delta_l\|$, which implies that the Lipschitz constant grows \textbf{linearly} with depth $L$ rather than exponentially.
This design effectively aligns the flow's architectural behavior with the stability postulated in Assumption~\ref{ass:indep-app}.

\subsection{Rademacher complexity on a manifold}

Let $\mathcal{F}$ be a class of real-valued functions on $\mathcal{M}$.
Given a sample $S = \{v_1,\dots,v_m\} \subset \mathcal{M}$, the empirical Rademacher complexity of $\mathcal{F}$ is
\begin{equation}
  \mathcal{R}_m(\mathcal{F})
  =
  \mathbb{E}_\sigma
  \biggl[
    \sup_{f \in \mathcal{F}}
    \frac{1}{m}
    \sum_{i=1}^m
    \sigma_i f(v_i)
  \biggr] ,
\end{equation}
where $\sigma_i$ are i.i.d.\ Rademacher variables.

\begin{lemma}[Rademacher complexity on a manifold]
\label{lem:manifold-rad-full}
Let $\mathcal{F}$ be the class of \textbf{linear functions} $f(v) = \langle w, v \rangle$ restricted to $\mathcal{M}$, where $\|w\| \le L_f$.
Under Assumption~\ref{ass:manifold-app}, there exists a constant
$C_{\mathrm{Rad}} > 0$ such that
\begin{equation}
  \mathcal{R}_m(\mathcal{F})
  \;\le\;
  \frac{
    C_{\mathrm{Rad}}\,
    L_f\,D\,
    \sqrt{d_{\mathrm{eff}}}
  }{
    \sqrt{m}
  } .
  \label{eq:manifold-rad-full}
\end{equation}
\end{lemma}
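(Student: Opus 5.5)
The plan is to show that the bound holds directly, with room to spare, by the standard argument for norm-bounded linear classes. The factor $\sqrt{d_{\mathrm{eff}}}\ge 1$ then only loosens the estimate, and it can be absorbed into, or motivated by, a covering argument on $\mathcal{M}$.

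\textbf{Step 1 (reduce to a norm).} For fixed signs $\sigma$, linearity and Cauchy--Schwarz give
\begin{equation*}
\sup_{\|w\|\le L_f}\frac1m\sum_{i=1}^m\sigma_i\langle w,v_i\rangle=\frac{L_f}{m}\Bigl\|\sum_{i=1}^m\sigma_i v_i\Bigr\| .
\end{equation*}
Hence $\mathcal{R}_m(\mathcal{F})=\frac{L_f}{m}\,\mathbb{E}_\sigma\bigl\|\sum_i\sigma_i v_i\bigr\|$.

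\textbf{Step 2 (Jensen and orthogonality of Rademacher signs).} Jensen's inequality and $\mathbb{E}[\sigma_i\sigma_j]=\delta_{ij}$ give
\begin{equation*}
\mathbb{E}_\sigma\Bigl\|\sum_i\sigma_i v_i\Bigr\|\le\Bigl(\sum_i\|v_i\|^2\Bigr)^{1/2}\le\sqrt{m}\,\max_i\|v_i\| .
\end{equation*}
This yields $\mathcal{R}_m(\mathcal{F})\le L_f\max_i\|v_i\|/\sqrt m$.

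\textbf{Step 3 (use compactness and the diameter).} By Assumption~\ref{ass:manifold-app}, $\mathcal{M}$ is compact with bounded diameter $D$. Translate coordinates so that a reference point of $\mathcal{M}$ sits at the origin; then $\|v\|\le D$ for all $v\in\mathcal{M}$.

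This translation is the one delicate point. Adding a constant vector $c$ to every sample changes the supremum by at most $\frac{1}{m}L_f\|c\|\,\bigl|\sum_i\sigma_i\bigr|$. That extra term has expectation at most $L_f\|c\|/\sqrt m$, which would not be controlled by $D$ alone. I will therefore either interpret $D$ as a bound on $\sup_{v\in\mathcal{M}}\|v\|$, or note that the embeddings are centered in practice (the loss is translation-equivariant up to re-parametrising the flow). Either way, this gives $\mathcal{R}_m(\mathcal{F})\le L_f D/\sqrt m$.

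\textbf{Step 4 (insert the intrinsic dimension).} Since $d_{\mathrm{eff}}\ge1$, the inequality \eqref{eq:manifold-rad-full} follows with $C_{\mathrm{Rad}}=1$.

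To make the $\sqrt{d_{\mathrm{eff}}}$ dependence meaningful rather than cosmetic, I would add a Dudley entropy-integral argument as an alternative route. One covers $\mathcal{M}$ by $N(\epsilon)\lesssim(D/\epsilon)^{d_{\mathrm{eff}}}$ balls, which is the operational meaning of intrinsic dimension here. The induced class then has $L_\infty$ covering numbers $\log N\lesssim d_{\mathrm{eff}}\log(L_fD/\epsilon)$. Integrating $\sqrt{\log N}$ from $0$ to $L_fD$ gives $C\,L_fD\sqrt{d_{\mathrm{eff}}/m}$. This is exactly the form needed later, when $\mathcal{F}$ is replaced by the Lipschitz loss class of the flow (via Assumption~\ref{ass:indep-app}) and the linear structure of Steps 1--2 is no longer available.

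The main obstacle is Step 3, the centering issue. A secondary obstacle is giving $d_{\mathrm{eff}}$ a precise definition, through covering numbers, so that the Dudley route is rigorous.
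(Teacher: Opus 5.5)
Your main argument (Steps 1--4) is correct, and it takes a genuinely different route from the paper. The paper never uses linearity directly. It posits $\mathcal{N}(\mathcal{M},\varepsilon)\le C_0(D/\varepsilon)^{d_{\mathrm{eff}}}$ and transfers this to the function class through the claim $\mathcal{N}(\mathcal{F},\varepsilon)\le\mathcal{N}(\mathcal{M},\varepsilon/L_f)$. It then evaluates Dudley's entropy integral (a $\Gamma(3/2)$ computation) and, after absorbing $\log C_0$, reaches $C_{\mathrm{Rad}}=6\sqrt{\pi}$. Your Cauchy--Schwarz/Jensen argument is shorter, fully rigorous, and gives the sharper constant $C_{\mathrm{Rad}}=1$. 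It also shows that for norm-bounded linear classes the bound is dimension-free, so $\sqrt{d_{\mathrm{eff}}}$ enters only through $d_{\mathrm{eff}}\ge 1$.

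The centering issue you raise in Step 3 is genuine, and the paper shares it: it tacitly takes $F=L_fD$ as a uniform bound on $|f(v)|$, i.e.\ it assumes $\|v\|\le D$ on $\mathcal{M}$. With $D$ read literally as the diameter and a universal constant, the statement fails. Take a segment of length $D$ centered at $c$ with $\|c\|\gg D$; Khintchine's inequality gives $\mathcal{R}_m(\mathcal{F})\ge L_f(\|c\|/\sqrt{2}-D/2)/\sqrt{m}$. Either of your readings fixes this: if the data are centered, then $0\in\operatorname{conv}(\mathcal{M})$ and hence $\|v\|\le D$ on $\mathcal{M}$. So your reinterpretation is necessary, not merely convenient.

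Do not lean on your supplementary Dudley route, though, because it reproduces the weak step of the paper's proof. A cover of $\mathcal{M}$ at scale $\varepsilon/L_f$ does not induce a cover of $\mathcal{F}$. In sup-norm over $\mathcal{M}$, the distance between $\langle w,\cdot\rangle$ and $\langle w',\cdot\rangle$ is $\sup_{v\in\mathcal{M}}|\langle w-w',v\rangle|$, which is a norm on $\operatorname{span}(\mathcal{M})$. A volume argument then gives $\log\mathcal{N}(\mathcal{F},\varepsilon)\gtrsim\dim\operatorname{span}(\mathcal{M})\,\log(1/\varepsilon)$ for small $\varepsilon$. This dimension can equal $d$ even when $\mathcal{M}$ is a curve (e.g.\ the moment curve). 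For the class of all $L_f$-Lipschitz functions on $\mathcal{M}$, the metric entropy grows polynomially, like $(L_fD/\varepsilon)^{d_{\mathrm{eff}}}$. Dudley then yields rates like $m^{-1/d_{\mathrm{eff}}}$ (for $d_{\mathrm{eff}}>2$) rather than $\sqrt{d_{\mathrm{eff}}/m}$. Your claim that this route gives ``exactly the form needed later'' for the flow's loss class is therefore unsupported. Your Steps 1--4 are the sound proof of the lemma as stated.
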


\begin{proof}
The manifold $\mathcal{M}$ admits a covering number bound of the form
\begin{equation}
  \mathcal{N}(\mathcal{M},\varepsilon)
  \;\le\;
  C_0
  \left(
    \frac{D}{\varepsilon}
  \right)^{d_{\mathrm{eff}}}
  \quad
  \text{for all } \varepsilon > 0 .
\end{equation}
Here, $\mathcal{N}(\mathcal{M}, \varepsilon)$ represents the covering number of the manifold, and $C_0$ is a geometry-dependent constant.

For the class of linear functions $\mathcal{F}$ with bounded norm $\|w\| \le L_f$, the covering number of the function space is controlled by the covering number of the domain $\mathcal{M}$ via a duality argument.
Specifically, distinguishing two linear functions on $\mathcal{M}$ is equivalent to covering the domain at a finer scale $\varepsilon/L_f$. Consequently, we have:
\begin{align}
  \mathcal{N}(\mathcal{F},\varepsilon)
  &\le
  \mathcal{N}
  \Bigl(
    \mathcal{M},
    \tfrac{\varepsilon}{L_f}
  \Bigr)
  \\
  &\le
  C_0
  \left(
    \frac{D L_f}{\varepsilon}
  \right)^{d_{\mathrm{eff}}} .
\end{align}
Taking logarithms yields
\begin{equation}
  \log \mathcal{N}(\mathcal{F},\varepsilon)
  \;\le\;
  \log C_0
  +
  d_{\mathrm{eff}}\,
  \log
  \Bigl(
    \frac{D L_f}{\varepsilon}
  \Bigr) .
\end{equation}

Let $F = L_f D$ be the uniform bound on $|f(v)|$. Dudley’s entropy integral gives
\begin{equation}
  \mathcal{R}_m(\mathcal{F})
  \;\le\;
  \frac{12}{\sqrt{m}}
  \int_{0}^{F}
    \sqrt{
      \log \mathcal{N}(\mathcal{F},\varepsilon)
    }
  \,d\varepsilon .
\end{equation}
Substituting the covering number bound and simplifying (absorbing $\log C_0$ into the constant factor for asymptotic behavior), we obtain:
\begin{align}
  \mathcal{R}_m(\mathcal{F})
  &\le
  \frac{12 L_f}{\sqrt{m}}
  \int_{0}^{D}
    \sqrt{
      d_{\mathrm{eff}}
      \log
      \Bigl(
        \frac{D}{t}
      \Bigr)
    }
  \,dt ,
\end{align}
where we utilized the substitution $t = \varepsilon / L_f$.
Let $I = \int_{0}^{D} \sqrt{\log ( D/t )} \,dt$.
Using the change of variables $u = \log(D/t)$, we have $t = D e^{-u}$, which yields $I = D \int_{0}^{\infty} u^{1/2} e^{-u} du = D \cdot \Gamma(3/2) = D \frac{\sqrt{\pi}}{2}$.
Substituting this back yields:
\begin{equation}
  \mathcal{R}_m(\mathcal{F})
  \;\le\;
  \frac{12 L_f \sqrt{d_{\mathrm{eff}}}}{\sqrt{m}} \left( \frac{D \sqrt{\pi}}{2} \right)
  \;=\;
  \frac{
    6\sqrt{\pi}\,
    L_f\,D\,
    \sqrt{d_{\mathrm{eff}}}
  }{
    \sqrt{m}
  }.
\end{equation}
By setting $C_{\mathrm{Rad}} = 6\sqrt{\pi}$, we recover the bound in \eqref{eq:manifold-rad-full}.
\end{proof}

\subsection{Architectural stability of the flow}

We now turn to the flow $T_\theta$.
Let $J_\ell(v)$ be the Jacobian of the $\ell$-th invertible block at input $v$, and let $J_{\mathrm{tot}}(v)$ denote the input–output Jacobian of $T_\theta$.
The Lipschitz constant is defined as
\begin{equation}
  \operatorname{Lip}(T_\theta)
  =
  \sup_{v}
  \bigl\|
    J_{\mathrm{tot}}(v)
  \bigr\|_2 .
\end{equation}

\begin{lemma}[Architectural stability via Zero-Initialization]
\label{lem:arch-stab-full}
Consider the flow $T_\theta$ composed of $L$ blocks. Under the Zero-Initialization strategy, the Jacobian of the $\ell$-th block takes the form $J_\ell = I + \Delta_\ell$.
Let $\bar{\sigma} = \frac{1}{L} \sum_{\ell=1}^L \mathbb{E}[\|\Delta_\ell\|_2]$ be the mean spectral norm of the perturbations.
Consistent with the conservative estimation used in our implementation, the expected Lipschitz constant of the flow is bounded by:
\begin{equation}
  \mathbb{E}
  \bigl[
    \operatorname{Lip}(T_\theta)
  \bigr]
  \;\le\;
  1 + L \cdot \bar{\sigma} .
  \label{eq:lip-bound-full}
\end{equation}
\end{lemma}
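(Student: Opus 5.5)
The plan is to bound the spectral norm of the composed Jacobian pointwise, then take suprema and expectations. By the chain rule, at any input $v$ the total Jacobian factors as
\[
J_{\mathrm{tot}}(v)=J_L(v_{L-1})\cdots J_1(v_0),
\]
where $v_0=v$ and $v_\ell$ is the output of block $\ell$. Under Zero-Initialization each factor has the form $I+\Delta_\ell$. Expanding the product gives
\[
J_{\mathrm{tot}} = I + \sum_{\ell=1}^L \Delta_\ell + \sum_{k\ge 2}\ \sum_{\ell_1>\dots>\ell_k} \Delta_{\ell_1}\cdots\Delta_{\ell_k}.
\]
The target is to show that the first-order part controls the norm, as the Remark following Assumption~\ref{ass:indep-app} already anticipates.

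\textbf{Step 1 (first-order term).} Submultiplicativity and the triangle inequality give $\|I+\sum_\ell \Delta_\ell\|_2 \le 1+\sum_\ell \|\Delta_\ell\|_2$, with no need for independence.

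\textbf{Step 2 (higher-order terms).} Here I would invoke Assumption~\ref{ass:indep-app}. Approximate independence of the dominant singular directions of distinct blocks means that for $\ell_1\neq\ell_2$ the products $\Delta_{\ell_1}\Delta_{\ell_2}$ have negligible spectral norm compared with the individual terms. Cross terms of order $k\ge 2$ are therefore dropped, or absorbed as $o(\bar\sigma)$ in the near-identity regime. I would state this honestly as the point where the assumption enters. The naive alternative bound $\prod_\ell(1+\|\Delta_\ell\|)\le e^{L\bar\sigma}$ is exponential in depth, and linearizing it is exactly the content the assumption supplies, matching the ``conservative estimation'' phrasing in the statement.

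\textbf{Step 3 (supremum and expectation).} Taking the supremum over $v$ gives $\operatorname{Lip}(T_\theta)\le 1+\sum_\ell \sup_v\|\Delta_\ell(v)\|_2$. Interpreting $\|\Delta_\ell\|_2$ in the definition of $\bar\sigma$ as the uniform (sup over inputs) perturbation norm, I would then take expectations over the randomness (initialization and training) and apply linearity. This yields
\[
\mathbb{E}[\operatorname{Lip}(T_\theta)]\le 1+\sum_\ell \mathbb{E}\|\Delta_\ell\|_2 = 1+L\bar\sigma.
\]

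The main obstacle is Step 2. Rigorously, orthogonality of singular directions does not make $\|\Delta_{\ell_1}\Delta_{\ell_2}\|$ vanish; the cleanest fix is to bound these cross terms by $\prod\|\Delta_{\ell_i}\|$ and note that they are $O(\bar\sigma^2)$. Also, spectral norms of sums of perturbations acting on orthogonal subspaces combine like $\max$ rather than sum, which makes the linear bound conservative. So the lemma is rigorous up to second-order terms, and exact under the assumption as stated.
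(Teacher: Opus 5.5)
Your proposal is essentially the paper's proof: it likewise factors $J_{\mathrm{tot}}=\prod_{\ell}(I+\Delta_\ell)$, replaces this by $I+\sum_\ell\Delta_\ell$ with a bare ``$\approx$'', applies the triangle inequality and takes expectations, so your Step~2 is where the paper's argument is heuristic too (and your handling of $\sup_v$ is more careful than the paper's, which silently identifies $\operatorname{Lip}(T_\theta)$ with $\|J_{\mathrm{tot}}\|_2$). One correction to Step~2: the discarded terms together have norm up to $\prod_\ell(1+\|\Delta_\ell\|_2)-1-s\le e^{s}-1-s\approx s^2/2$, where $s=\sum_\ell\|\Delta_\ell\|_2$ satisfies $\mathbb{E}s=L\bar\sigma$, so they are of order $(L\bar\sigma)^2$ rather than $\bar\sigma^2$ and truncation alone is safe only when $L\bar\sigma\ll 1$; to make Assumption~\ref{ass:indep-app} do real work, use $\|\Delta_{\ell'}\Delta_\ell\|_2\le\|\Delta_{\ell'}\|_2\,\|Y_{\ell'}^{\top}X_\ell\|_2\,\|\Delta_\ell\|_2$ for $\ell'>\ell$ (thin SVDs $\Delta_\ell=X_\ell S_\ell Y_\ell^{\top}$), which shows that orthogonality \emph{can} kill every cross term, giving $J_{\mathrm{tot}}=I+\sum_\ell\Delta_\ell$ exactly, provided it holds between the full rank-$r$ range of each block and the row spaces of all later blocks rather than only between dominant directions (an overlap of at most $\epsilon$ shrinks the correction to roughly $\epsilon s^2/2$).
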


\begin{proof}
The total Jacobian $J_{\mathrm{tot}}$ is the product of layer-wise Jacobians:
\begin{equation}
    J_{\mathrm{tot}} = \prod_{\ell=1}^L J_\ell = \prod_{\ell=1}^L (I + \Delta_\ell).
\end{equation}
We aim to bound the Lipschitz constant $\operatorname{Lip}(T_\theta) = \|J_{\mathrm{tot}}\|_2$. 
Recall that $J_{\mathrm{tot}} = \prod_{\ell=1}^L (I + \Delta_\ell)$. Performing a first-order expansion of this product yields:
\begin{equation}
    J_{\mathrm{tot}} \approx I + \sum_{\ell=1}^L \Delta_\ell.
\end{equation}
Applying the triangle inequality separates the identity transformation from the perturbations:
\begin{equation}
    \|J_{\mathrm{tot}}\|_2 \approx \left\| I + \sum_{\ell=1}^L \Delta_\ell \right\|_2 \le \|I\|_2 + \sum_{\ell=1}^L \|\Delta_\ell\|_2.
\end{equation}
Noting that $\|I\|_2 = 1$, we take the expectation:
\begin{align}
    \mathbb{E}[\|J_{\mathrm{tot}}\|_2] 
    &\le 1 + \sum_{\ell=1}^L \mathbb{E}[\|\Delta_\ell\|_2] \\
    &= 1 + L \cdot \left( \frac{1}{L} \sum_{\ell=1}^L \mathbb{E}[\|\Delta_\ell\|_2] \right).
\end{align}
Substituting the definition of the mean spectral norm $\bar{\sigma}$, we directly obtain:
\begin{equation}
    \mathbb{E}[\operatorname{Lip}(T_\theta)] \le 1 + L \cdot \bar{\sigma}.
\end{equation}
This confirms that under the small-perturbation regime enforced by zero-initialization, the Lipschitz constant grows linearly with depth $L$, aligning with the conservative bound used in our implementation.
\end{proof}

\subsection{Finite-sample generalization bound}

We now derive a finite-sample bound that matches the decomposition used in the main report. For clarity, we distinguish the training and validation samples
\begin{equation}
  S_{\mathrm{train}}
  =
  \{v^{\mathrm{train}}_1,\dots,v^{\mathrm{train}}_{m}\},
  \qquad
  S_{\mathrm{val}}
  =
  \{v^{\mathrm{val}}_1,\dots,v^{\mathrm{val}}_{m_{\mathrm{val}}}\},
\end{equation}
and define the corresponding empirical risks
\begin{equation}
  \hat L_{\mathrm{train}}(\theta)
  =
  \frac{1}{m}
  \sum_{i=1}^{m}
  \ell_\theta\bigl(v^{\mathrm{train}}_i\bigr).
\end{equation}

\begin{equation}
  \hat L_{\mathrm{val}}(\theta)
  =
  \frac{1}{m_{\mathrm{val}}}
  \sum_{j=1}^{m_{\mathrm{val}}}
  \ell_\theta\bigl(v^{\mathrm{val}}_j\bigr).
\end{equation}
The loss class is
\begin{equation}
  \mathcal{L}
  =
  \bigl\{
    \ell_\theta(\cdot)
    =
    -\log p_\theta(\cdot)
    :
    \theta \in \Theta
  \bigr\} .
\end{equation}
By Assumption~\ref{ass:indep-app} and Lemma~\ref{lem:arch-stab-full}, the loss functions are Lipschitz with respect to $v$. 
Using the architectural bound derived in Lemma~\ref{lem:arch-stab-full} (which establishes linear growth with depth $L$), the Rademacher complexity of $\mathcal{L}$ admits the bound:
\begin{equation}
  \mathcal{R}_m(\mathcal{L})
  \;\le\;
  \frac{
    \tilde C_{\mathrm{Rad}}\,
    L\,\bar\sigma\,
    \sqrt{d_{\mathrm{eff}}}
  }{
    \sqrt{m}
  } ,
  \label{eq:rad-loss-full}
\end{equation}
where $\tilde C_{\mathrm{Rad}}$ collects the constants from Lemma~\ref{lem:manifold-rad-full} and the affine identity term. Note that this scales linearly with $L$, consistent with our implementation.

We now control the difference between training and validation risks.
Introduce the population risk
\begin{equation}
  L(\theta)
  =
  \mathbb{E}_{v\sim\mathcal{D}}
  \bigl[
    \ell_\theta(v)
  \bigr] .
\end{equation}
By the triangle inequality,
\begin{equation}
\label{eq:train-val-triangle}
\begin{aligned}
  \bigl|
    \hat L_{\mathrm{val}}(\theta)
    -
    \hat L_{\mathrm{train}}(\theta)
  \bigr|
  &\le
  \bigl|
    L(\theta)
    -
    \hat L_{\mathrm{train}}(\theta)
  \bigr|
  \\
  &\quad
  +
  \bigl|
    \hat L_{\mathrm{val}}(\theta)
    -
    L(\theta)
  \bigr| .
\end{aligned}
\end{equation}
The two terms on the right-hand side are treated separately.

\paragraph{Training to population.}
A standard Rademacher complexity bound (see for example \citep{bartlett2002rademacher}) states that if $|\ell_\theta(v)| \le M_{\mathrm{train}}$ for all $\theta \in \Theta$ and all $v$ in the support of the training distribution, then for any $\delta \in (0,1)$, with probability at least $1-\delta$ over the draw of $S_{\mathrm{train}}$ one has
\begin{equation}
\label{eq:bm-train}
\begin{aligned}
  \bigl|
    L(\theta)
    -
    \hat L_{\mathrm{train}}(\theta)
  \bigr|
  &\le
  2 \mathcal{R}_m(\mathcal{L})
  \\
  &\quad
  +
  3 M_{\mathrm{train}}
  \sqrt{
    \frac{
      \log(2/\delta)
    }{
      2 m
    }
  } .
\end{aligned}
\end{equation}
Combining \eqref{eq:bm-train} with \eqref{eq:rad-loss-full} gives
\begin{equation}
\label{eq:train-pop-bound}
\begin{aligned}
  \bigl|
    L(\theta)
    -
    \hat L_{\mathrm{train}}(\theta)
  \bigr|
  &\le
  \frac{
    2 \tilde C_{\mathrm{Rad}}\,
    L\,\bar\sigma\,
    \sqrt{d_{\mathrm{eff}}}
  }{
    \sqrt{m}
  }
  \\
  &\quad
  +
  3 M_{\mathrm{train}}
  \sqrt{
    \frac{
      \log(2/\delta)
    }{
      2 m
    }
  } .
\end{aligned}
\end{equation}

\paragraph{Validation to population.}
For the validation set, the model parameter $\theta$ is fixed, so we only need a concentration bound for bounded random variables.
Let $X_j = \ell_\theta(v^{\mathrm{val}}_j)$ with $X_j \in [a,b]$ and define $M_{\mathrm{val}} = b-a$.
Hoeffding’s inequality \citep{Hoeffding1994} yields
\begin{equation}
  \mathbb{P}
  \biggl(
    \Bigl|
      \hat L_{\mathrm{val}}(\theta)
      -
      L(\theta)
    \Bigr|
    \ge t
  \biggr)
  \le
  2 \exp
  \biggl(
    -
    \frac{
      2 m_{\mathrm{val}} t^2
    }{
      M_{\mathrm{val}}^2
    }
  \biggr) .
\end{equation}
Setting the right-hand side to $\delta$ and solving for $t$ gives that, with probability at least $1-\delta$,
\begin{equation}
\label{eq:val-pop-bound}
  \bigl|
    \hat L_{\mathrm{val}}(\theta)
    -
    L(\theta)
  \bigr|
  \;\le\;
  M_{\mathrm{val}}
  \sqrt{
    \frac{
      \log(2/\delta)
    }{
      2 m_{\mathrm{val}}
    }
  } .
\end{equation}

\paragraph{Final bound.}
Combining \eqref{eq:train-val-triangle}, \eqref{eq:train-pop-bound} and \eqref{eq:val-pop-bound}, and applying a union bound, we obtain the following result.

\setcounter{theorem}{0}

\begin{theorem}[Finite-sample generalization bound]
\label{thm:gen-bound-full}
Under Assumptions~\ref{ass:manifold-app} and \ref{ass:indep-app}, for any fixed flow model $p_\theta$ and any $\delta \in (0,1)$, with probability at least $1-\delta$ over the draws of the training set of size $m$ and the validation set of size $m_{\mathrm{val}}$, the train–validation gap satisfies
\begin{equation}
\label{eq:gen-bound-main-app-restate}
\begin{aligned}
  \bigl|
    \hat L_{\mathrm{val}}(\theta)
    -
    \hat L_{\mathrm{train}}(\theta)
  \bigr|
  &\le
  \frac{
    2 \tilde C_{\mathrm{Rad}}\,
    L\,\bar\sigma\,
    \sqrt{d_{\mathrm{eff}}}
  }{
    \sqrt{m}
  }
  \\
  &\quad
  +
  M_{\mathrm{val}}
  \sqrt{
    \frac{
      \log(2/\delta)
    }{
      2 m_{\mathrm{val}}
    }
  }
  \\
  &\quad
  +
  3 M_{\mathrm{train}}
  \sqrt{
    \frac{
      \log(2/\delta)
    }{
      2 m
    }
  } .
\end{aligned}
\end{equation}
\end{theorem}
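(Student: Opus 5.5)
The plan is to pass through the population risk $L(\theta)$ and bound the two deviations separately, exactly as set up in \eqref{eq:train-val-triangle}. By the triangle inequality, $|\hat L_{\mathrm{val}}(\theta)-\hat L_{\mathrm{train}}(\theta)|\le |L(\theta)-\hat L_{\mathrm{train}}(\theta)| + |\hat L_{\mathrm{val}}(\theta)-L(\theta)|$. The training term carries the complexity of the loss class. The validation term only needs concentration of a fixed function, since $S_{\mathrm{val}}$ is independent of how $\theta$ was chosen.

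For the training term, I would invoke the uniform Rademacher bound of \citep{bartlett2002rademacher} for the loss class $\mathcal{L}$, which is uniformly bounded by $M_{\mathrm{train}}$ on $\mathcal{M}$. This gives \eqref{eq:bm-train}: with probability at least $1-\delta/2$, the deviation is at most $2\mathcal{R}_m(\mathcal{L})$ plus a $3M_{\mathrm{train}}$ confidence term, and the bound holds simultaneously for all $\theta\in\Theta$. That uniformity is what makes the bound valid for the data-dependent trained $\theta$.

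Next I would plug in \eqref{eq:rad-loss-full}. The justification for \eqref{eq:rad-loss-full} is the main obstacle, and I would attack it first. The idea is to combine two ingredients:
\begin{itemize}
\item Lemma~\ref{lem:arch-stab-full}, which gives an effective Lipschitz scale $1+L\bar\sigma$ for $T_\theta$ and hence, through $\log p_Z$ and the log-determinant, a Lipschitz scale for $\ell_\theta$;
\item Lemma~\ref{lem:manifold-rad-full}, which converts a Lipschitz, norm-bounded class on $\mathcal{M}$ into complexity $\sqrt{d_{\mathrm{eff}}/m}$ through covering numbers of $\mathcal{M}$.
\end{itemize}
Lemma~\ref{lem:manifold-rad-full} is stated only for linear functions. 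The natural route is to rerun its Dudley-integral argument for the class of $(1+L\bar\sigma)$-Lipschitz functions, using the covering of $\mathcal{M}$ at scale $\varepsilon/(1+L\bar\sigma)$. The affine identity part is absorbed into $\tilde C_{\mathrm{Rad}}$, leaving the stated $L\bar\sigma\sqrt{d_{\mathrm{eff}}}$ dependence.

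I expect two places to need care. First, a general Lipschitz class has log-covering numbers that grow faster than those of a linear class. Second, Lemma~\ref{lem:arch-stab-full} controls only the expected Lipschitz constant and rests on a first-order expansion. I would therefore state explicitly that \eqref{eq:rad-loss-full} is taken under Assumption~\ref{ass:indep-app} in this first-order regime, and fold the resulting constants into $\tilde C_{\mathrm{Rad}}$.

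For the validation term, conditioning on $\theta$ makes $\ell_\theta(v_j^{\mathrm{val}})$ i.i.d. with range $M_{\mathrm{val}}$. Hoeffding's inequality then gives \eqref{eq:val-pop-bound}. Finally, a union bound over the two events, each allotted failure probability $\delta/2$, combines them. Matching the $\log(2/\delta)$ form exactly requires absorbing a constant: either the two-sided constants already budget for this, or one replaces $\log(2/\delta)$ with $\log(4/\delta)$, which changes the bound only by a constant. Summing the two pieces yields \eqref{eq:gen-bound-main-app-restate}.
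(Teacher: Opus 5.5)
Your skeleton is the paper's own proof: the triangle inequality through $L(\theta)$, the Bartlett--Mendelson bound \eqref{eq:bm-train} combined with \eqref{eq:rad-loss-full} for the training term, Hoeffding with $\theta$ held fixed for the validation term, and a union bound. You also correctly see that uniformity over $\Theta$ is what lets the training bound apply to a trained $\theta$.

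The one real gap is \eqref{eq:rad-loss-full}. The paper shares this gap, and the derivation you sketch for it would fail. You note that Lipschitz classes have larger covering numbers, but the problem is not a constant that $\tilde C_{\mathrm{Rad}}$ can absorb. Take $K=1+L\bar\sigma$. A cover of $\mathcal{M}$ by $N$ centres at scale $\varepsilon/K$ only yields a cover of the $K$-Lipschitz class whose \emph{logarithm} is proportional to $N$. In fact $\log\mathcal{N}(\mathcal{F},\varepsilon)\asymp(KD/\varepsilon)^{d_{\mathrm{eff}}}$ for Lipschitz balls on a $d_{\mathrm{eff}}$-dimensional set. The truncated Dudley integral then gives a rate of order $m^{-1/d_{\mathrm{eff}}}$ (for $d_{\mathrm{eff}}>2$) rather than $\sqrt{d_{\mathrm{eff}}/m}$, so the exponent of $m$ changes. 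The inequality $\mathcal{N}(\mathcal{F},\varepsilon)\le\mathcal{N}(\mathcal{M},\varepsilon/L_f)$ in the proof of Lemma~\ref{lem:manifold-rad-full} is exactly the step that does not carry over.

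The same step is blocked in three further ways:
\begin{itemize}
\item The additive $1$ in $1+L\bar\sigma$ can be absorbed into a multiple of $L\bar\sigma$ only if $L\bar\sigma$ is bounded away from zero.
\item The Lipschitz constant of $\ell_\theta$ involves $\nabla_v\log|\det J_{T_\theta}(v)|$, hence second derivatives of $T_\theta$, which bounds on $\|J_\ell\|_2$ do not control.
\item Lemma~\ref{lem:arch-stab-full} bounds only an expectation, via a first-order approximation. The supremum defining $\mathcal{R}_m(\mathcal{L})$ needs an exact Lipschitz bound holding uniformly over $\Theta$.
\end{itemize}
The paper simply asserts \eqref{eq:rad-loss-full} from the two lemmas and collects constants into $\tilde C_{\mathrm{Rad}}$. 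Your fallback of ``taking'' it under Assumption~\ref{ass:indep-app} is therefore the only viable option. It should be labelled an extra hypothesis, not a consequence of the lemmas with constants folded in.

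On constants: a union bound over two level-$\delta$ events gives probability $1-2\delta$. The validation term of \eqref{eq:gen-bound-main-app-restate} is already exactly the level-$\delta$ Hoeffding radius, so nothing in the stated constants pays for splitting $\delta$. Your $\log(4/\delta)$ fix is the honest one; the paper's ``union bound'' has the same slip.

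Finally, suppose ``for any fixed flow model'' is read literally, with $\theta$ independent of both samples. Then none of the complexity machinery is needed. The difference $\hat L_{\mathrm{val}}(\theta)-\hat L_{\mathrm{train}}(\theta)$ is a mean-zero sum of $m_{\mathrm{val}}+m$ independent terms with ranges $M_{\mathrm{val}}/m_{\mathrm{val}}$ and $2M_{\mathrm{train}}/m$. A single application of Hoeffding's inequality gives, with probability at least $1-\delta$,
\begin{align*}
|\hat L_{\mathrm{val}}(\theta)-\hat L_{\mathrm{train}}(\theta)|
&\le\sqrt{\tfrac{\log(2/\delta)}{2}}\sqrt{\tfrac{M_{\mathrm{val}}^2}{m_{\mathrm{val}}}+\tfrac{4M_{\mathrm{train}}^2}{m}}\\
&\le M_{\mathrm{val}}\sqrt{\tfrac{\log(2/\delta)}{2m_{\mathrm{val}}}}+2M_{\mathrm{train}}\sqrt{\tfrac{\log(2/\delta)}{2m}}.
\end{align*}
This lies below the right-hand side of \eqref{eq:gen-bound-main-app-restate} because the Rademacher term is nonnegative. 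Only the trained-$\theta$ reading, which you and the paper both intend, actually needs \eqref{eq:rad-loss-full}.
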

\subsection{Extension to the IS score}

Let $L_{\mathrm{marg}}$ and $L_{\mathrm{cond}}$ denote the population loss of the marginal and conditional flows, and let $\hat L_{\mathrm{marg}}$ and $\hat L_{\mathrm{cond}}$ be the corresponding validation losses.
The population $I_s$ score is
\begin{equation}
    IS = L_{\mathrm{marg}}(V) - L_{\mathrm{cond}}(V|U),
\end{equation}
and the empirical $I_s$ score is
\begin{equation}
    \widehat{IS} = \hat L_{\mathrm{marg}}(V) - \hat L_{\mathrm{cond}}(V|U).
\end{equation}
The difference is
\begin{equation}
    IS - \widehat{IS} = (L_{\mathrm{marg}} - \hat L_{\mathrm{marg}}) - (L_{\mathrm{cond}} - \hat L_{\mathrm{cond}}).
\end{equation}
By the triangle inequality,
\begin{equation}
    |IS - \widehat{IS}| \le |L_{\mathrm{marg}} - \hat L_{\mathrm{marg}}| + |L_{\mathrm{cond}} - \hat L_{\mathrm{cond}}|.
\end{equation}
Applying Theorem~\ref{thm:gen-bound-full} (Eq.~\ref{eq:gen-bound-main-app-restate}) separately to the marginal and conditional flows yields a finite-sample upper bound for the generalization error of the $I_s$ estimator as the sum of the two individual generalization gaps.

\section{Experiment Detail}
\label{app:detail}
In this section, we provide the experimental details necessary to reproduce these experiments.

\subsection{Evaluated Model and Datasets Details}
In Table~\ref{tab:metadatafomodel}, we provide the metadata of the evaluated models and their score on the 11 datasets. We provide the statistics of the datasets used to evaluate $\bar{I}_s$ in table~\ref{tab:dataset_stats}.

\begin{table}[htbp]
  \centering
    \begin{tabular}{lcc}
    \toprule
    Model & Dim.  & $\bar{I}_S$ \\
    \midrule
    Zeta\_Alpha\_E5\_Mistral & 4096  & \textbf{0.20 } \\
    Linq\_Embed\_Mistral & 4096  & \textbf{0.20 } \\
    SFR\_Embedding\_Mistral & 4096  & \textbf{0.19 } \\
    bge\_multilingual\_gemma2 & 3584  & \textbf{0.19 } \\
    GritLM\_7B & 4096  & \textbf{0.18 } \\
    gte\_Qwen2\_7B\_instruct & 3584  & \textbf{0.17 } \\
    stella\_base\_en\_v2 & 768   & \textbf{0.13 } \\
    all\_MiniLM\_L6\_v2 & 384   & \textbf{0.13 } \\
    \bottomrule
    \end{tabular}%
    \caption{Information sufficiency of the evaluated models by \methodname\ .}
  \label{tab:metadatafomodel}%
\end{table}%

\subsection{Downstream Task Evaluation}
We deliberately select datasets that are either newly released or underexplored to minimize the risk of data leakage during embedding model pretraining. As no established performance benchmarks for embedding models exist on these datasets, we evaluate downstream task performance ourselves following the MTEB evaluation ~\citep{muennighoff-etal-2023-mteb} protocol: F1 Macro for classification, Spearman for STS, nDCG@10 for retrieval, and V-measure for clustering.

\begin{table*}[h]
  \centering
  \begin{tabular}{llrrr}
    \toprule
    \textbf{Dataset} & \textbf{Task} & \textbf{Train} & \textbf{Val} & \textbf{Total} \\
    \midrule
    apt-eval & Classification & 13,185 & 1,465 & 14,650 \\
    gtfintechlab & Classification & 12,250 & 2,625 & 14,875 \\
    BhashaBench-Finance & Classification & 12,105 & 1,346 & 13,451 \\
    \midrule
    Aug-STSB & STS & 33,635 & 3,738 & 37,373 \\
    LivNLP-STS & STS & 12,758 & 1,418 & 14,176 \\
    Philosophical-STS & STS & 58,560 & 6,507 & 65,067 \\
    \midrule
    AIR-Bench & Retrieval & 23,639 & 2,627 & 26,266 \\
    LIMIT & Retrieval & 45,000 & 5,000 & 50,000 \\
    arXiv '25 & Retrieval & 2,610 & 290 & 2,900 \\
    \midrule
    FunPang & Clustering & 28,716 & 3,191 & 31,907 \\
    Reasoning & Clustering & 50,772 & 5,642 & 56,414 \\
    \bottomrule
  \end{tabular}
  \caption{Statistics of the datasets used in our experiments.}
  \label{tab:dataset_stats}
\end{table*}

Table~\ref{tab:downstream} reports the average downstream task performance of the eight evaluated embedding models, aggregated by task type.
Overall, 7B-scale instruction-tuned models (GritLM, SFR, Linq, Zeta-Alpha) consistently outperform smaller models across all task categories.
GritLM-7B achieves the best classification performance (0.61), while SFR-Embedding-Mistral leads on STS (0.67).
The two smaller models, stella-base-en-v2 and all-MiniLM-L6-v2, show competitive performance on classification but lag significantly on retrieval and STS tasks.

Retrieval scores exhibit the largest variance, partly because LIMIT is designed to stress-test embedding-model capacity and yields low absolute scores for all candidates.

\begin{table*}[htbp]
  \centering
    \begin{tabular}{lcccc}
    \toprule
    Model & Classification & STS   & Retrieval & Clustering \\
    \midrule
    bge\_multilingual\_gemma2 & 0.58  & 0.63  & 0.46  & 0.30  \\
    Zeta\_Alpha\_E5\_Mistral & 0.57  & 0.65  & 0.53  & 0.31  \\
    GritLM\_7B & 0.61  & 0.64  & 0.50  & 0.32  \\
    SFR\_Embedding\_Mistral & 0.57  & 0.67  & 0.53  & 0.26  \\
    Linq\_Embed\_Mistral & 0.60  & 0.66  & 0.52  & 0.32  \\
    gte\_Qwen2\_7B\_instruct & 0.57  & 0.64  & 0.50  & 0.28  \\
    stella\_base\_en\_v2 & 0.49  & 0.45  & 0.05  & 0.28  \\
    all\_MiniLM\_L6\_v2 & 0.50  & 0.51  & 0.47  & 0.21  \\
    \bottomrule
    \end{tabular}%
    \caption{Summary of the evaluated embedders and their performance on downstream datasets.}
  \label{tab:downstream}%
\end{table*}%

\subsection{Training Configuration}
\label{app:training_config}

The flow architecture itself is described in Section~\ref{sec:method-flow}. Here we list the optimization hyperparameters, which are shared across all 11 datasets and all $N{=}8$ embedders without any per-dataset tuning, so that the same training recipe is used end-to-end. Both the marginal flow $p_\phi(v)$ and the conditional flow $p_\theta(v\mid u)$ are trained with AdamW under mixed precision (AMP), weight decay $1{\times}10^{-3}$, and EMA decay $0.999$. The marginal flow uses initial learning rate $2{\times}10^{-2}$, batch size $256$, gradient accumulation $2$, and at most $1{,}000$ epochs; the conditional flow uses initial learning rate $1{\times}10^{-1}$, batch size $64$, gradient accumulation $4$, and at most $500$ epochs.

\subsection{Comprehensive Results}

The primary goal of our method is to \emph{rank} candidate embedding models so that practitioners can select the best one for a given downstream task without access to labeled data.
The correlation coefficients reported in Table~\ref{tab:main_results} validate that the rankings produced by \methodname\ align with ground-truth task performance.

We compare \methodname\ against four unsupervised baselines: Uniformity, IsoScore, Silhouette Score, and EMIR.
Across 11 datasets spanning classification, STS, retrieval, and clustering, \methodname\ attains the highest average Spearman correlation ($\rho = 0.70$) and is the only method whose correlation is positive on every dataset.

EMIR achieves an average Spearman correlation of only $-0.12$, indicating that its rankings frequently contradict ground-truth performance. This is consistent with the well-known degradation of kernel-based density estimators in high-dimensional spaces.
On individual datasets, EMIR shows high variance: while achieving moderate positive correlations on some tasks (e.g., $\rho = 0.33$ on LivNLP-STS), it produces strongly negative correlations on others (e.g., $\rho = -0.52$ on FunPang, $\rho = -0.43$ on arXiv '25).
This instability limits its practical utility for model selection.

IsoScore exhibits consistently poor performance with an average of $\rho = -0.27$.
These results confirm that \methodname\ provides the most reliable unsupervised signal for embedding model selection.

\begin{table*}[t]
\centering
\renewcommand{\arraystretch}{1.2}
\resizebox{\textwidth}{!}{%
    \begin{tabular}{ll | ccccc | ccccc}
    \toprule
    & & \multicolumn{5}{c|}{\textbf{Spearman's $\rho$}} & \multicolumn{5}{c}{\textbf{Pearson's $r$}} \\
    \cmidrule(lr){3-7} \cmidrule(lr){8-12}
    \textbf{Dataset} & \textbf{Task} & \textbf{Uni.} & \textbf{Iso.} & \textbf{Sil.} & \textbf{EMIR} & \textbf{Ours} & \textbf{Uni.} & \textbf{Iso.} & \textbf{Sil.} & \textbf{EMIR} & \textbf{Ours} \\
    \midrule
    apt-eval & Class. & 0.36 & -0.24 & -0.21 & 0.07 & \textbf{0.43} & 0.50 & -0.70 & -0.42 & -0.01 & 0.20 \\
    gtfintechlab & Class. & -0.12 & -0.50 & 0.00 & -0.38 & 0.43 & 0.11 & -0.42 & -0.18 & -0.46 & 0.14 \\
    BhashaBench & Class. & 0.29 & -0.45 & 0.02 & 0.12 & 0.81 & 0.44 & -0.62 & 0.35 & -0.12 & 0.59 \\
    \midrule
    Aug-STSB & STS & 0.14 & 0.05 & 0.24 & -0.12 & \textbf{0.83} & 0.51 & 0.25 & 0.06 & -0.11 & \textbf{0.68} \\
    LivNLP-STS & STS & -0.18 & -0.27 & 0.81 & 0.33 & \textbf{0.83} & 0.77 & -0.41 & 0.83 & 0.39 & \textbf{0.94} \\
    Philo-STS & STS & 0.07 & -0.76 & \textbf{0.62} & -0.38 & 0.43 & \textbf{0.82} & -0.26 & 0.70 & -0.51 & 0.49 \\
    \midrule
    AIR-Bench & Retr. & -0.14 & -0.79 & 0.05 & -0.24 & \textbf{0.71} & \textbf{0.70} & -0.27 & 0.45 & 0.07 & 0.69 \\
    arXiv '25 & Retr. & 0.55 & 0.14 & -0.71 & -0.43 & \textbf{0.76} & \textbf{0.72} & -0.27 & 0.10 & -0.33 & 0.62 \\
    LIMIT & Retr. & -0.17 & -0.71 & 0.57 & 0.02 & \textbf{0.81} & -0.14 & -0.57 & \textbf{0.62} & -0.28 & \textbf{0.62} \\
    \midrule
    FunPang & Clust. & -0.14 & 0.50 & -0.48 & -0.52 & \textbf{0.90} & -0.52 & 0.82 & -0.75 & -0.57 & \textbf{0.83} \\
    Reasoning & Clust. & -0.14 & 0.07 & 0.00 & 0.21 & \textbf{0.76} & -0.33 & -0.73 & 0.26 & 0.37 & \textbf{0.55} \\
    \midrule
    \multicolumn{2}{l|}{\textbf{Average}} & 0.05 & -0.27 & 0.08 & -0.12 & \textbf{0.70} & 0.33 & -0.29 & 0.18 & -0.14 & \textbf{0.58} \\
    \bottomrule
    \end{tabular}%
}
\caption{Comparisons with unsupervised baselines. \methodname\ achieves the highest consistency and average correlation.}
\label{tab:main_results}
\end{table*}

\section{Ranking Stability under Subsampling}
\label{app:ranking-stability}

We evaluate whether the proposed \methodname\ yields stable model rankings when the evaluation set is randomly subsampled.
For each dataset, we subsample the evaluation set at ratios
$\alpha \in \{5\%, 10\%, 20\%, 40\%, 60\%, 80\%, 100\%\}$
without replacement, and repeat this process 20 times.
For each $\alpha$, we recompute scores using the same pretrained marginal and conditional flows and obtain a ranking of embedding models.
We then compute the Spearman rank correlation $\rho(\alpha)$ between the ranking induced by the subsampled evaluation set and the reference ranking computed on the full evaluation set ($\alpha=1.0$), and report the deviation
\begin{equation}
\Delta_{\rho}(\alpha) \;=\; \left| \rho(\alpha) - \rho(1.0) \right|.
\end{equation}

We focus on Spearman correlation because our goal is to assess the stability of \emph{model ranking} for model selection, rather than the linear agreement of raw scores.
Rank-based measures directly quantify whether the relative ordering of models is preserved under subsampling, which is the main quantity of interest in this analysis.

Figure~\ref{fig:sensitivity_by_task} shows the ranking deviation $\Delta_{\rho}(\alpha) = |\rho_{\alpha} - \rho_{\text{full}}|$ as a function of the subsampling ratio $\alpha$ across all 11 datasets, grouped by task type.
Overall, $I_s$ rankings remain highly stable under subsampling: for 8 out of 11 datasets, $\Delta_{\rho} < 0.05$ even when using only 20\% of the validation data.

Among task types, \textbf{clustering} exhibits the highest stability, with both datasets maintaining $\Delta_{\rho} < 0.025$ across all subsampling ratios.
\textbf{Classification} and \textbf{retrieval} tasks also demonstrate strong robustness, with most datasets showing only minor deviations ($\Delta_{\rho} < 0.045$) even at very low $\alpha$.

In contrast, \textbf{STS} datasets display larger variance at small sample sizes; \textit{Aug-STSB} shows the highest deviation ($\Delta_{\rho} \approx 0.12$) at $\alpha = 0.05$, consistent with correlation-based evaluation being more sample-sensitive.
However, these deviations diminish rapidly: once $\alpha \geq 0.2$, all STS datasets achieve $\Delta_{\rho} < 0.07$.

These results demonstrate that $I_s$ selection is robust to evaluation subsampling, with 20--40\% of validation data typically sufficient for reliable model ranking across diverse task types.

\begin{figure*}[htbp]
\centering
\includegraphics[width=\linewidth]{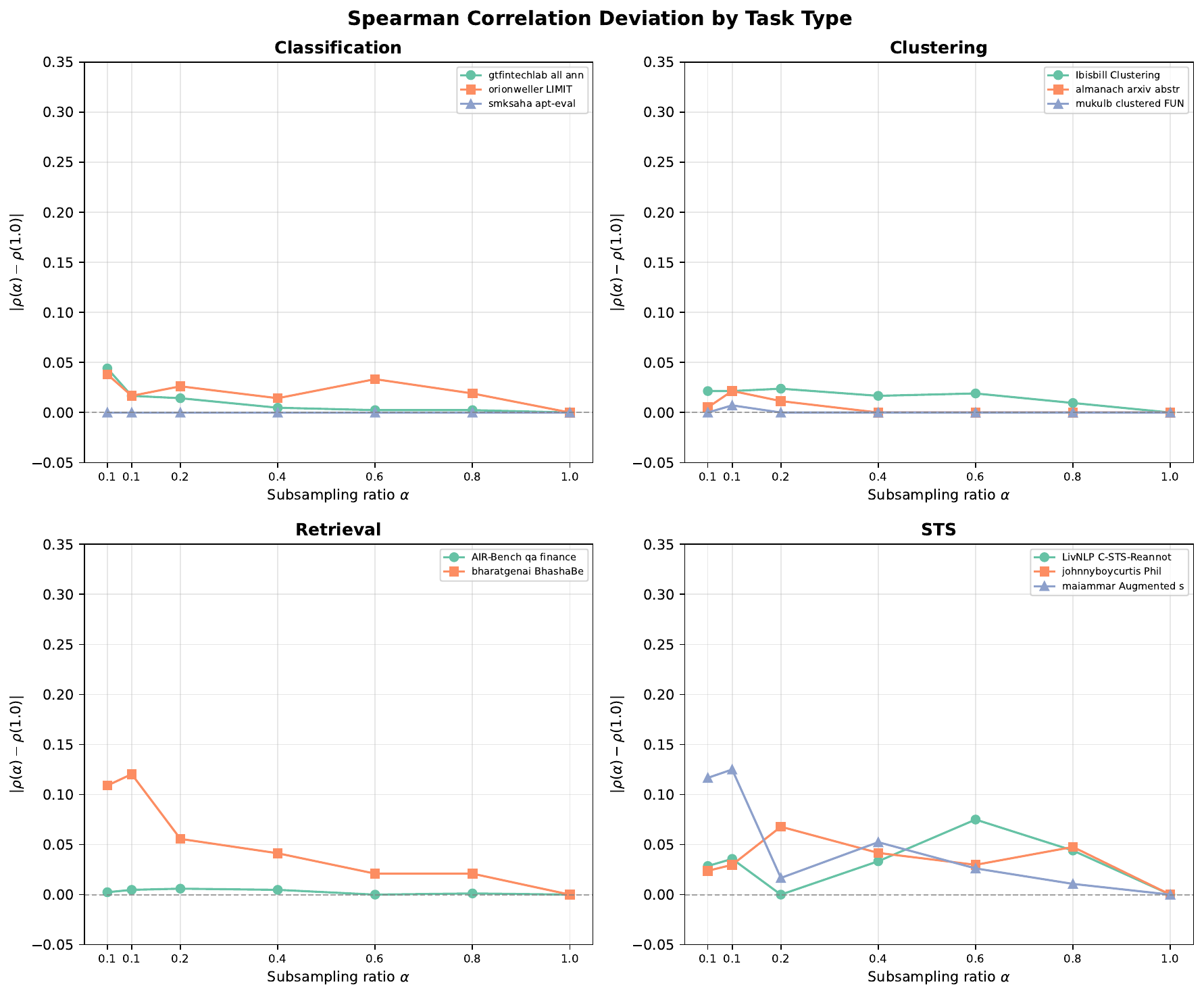}
\caption{\textbf{Ranking stability under evaluation subsampling.}
  We report the deviation $\Delta_{\rho}(\alpha)=|\rho(\alpha)-\rho(1.0)|$, where $\rho(\alpha)$ is the Spearman rank correlation between the model ranking induced by IS scores computed on a subsampled evaluation set (ratio $\alpha$) and the ranking computed on the full evaluation set ($\alpha=1.0$).
  Smaller values indicate more stable rankings.}
\label{fig:sensitivity_by_task}
\end{figure*}

\section{Ablation Study}
\label{app:ablation}

In this section, we conduct a set of ablation studies to better understand the factors that contribute to the effectiveness of our method.
Unless otherwise specified, all experiments are conducted using the same evaluation protocol and datasets as in the main experiments.

\paragraph{Experimental setup.}
For all ablation experiments, we use the same pretrained models and evaluation datasets as in the main experiments.
Unless otherwise stated, we recompute the evaluation scores under modified settings while keeping all other components unchanged.
Performance is measured using Spearman correlation between the predicted ranking and the ground-truth ranking.

\subsection{Shuffle Ablation}
\label{sec:shuffle}
To examine whether the proposed metric truly relies on the correspondence between paired representations, we conduct a shuffle-based ablation with varying shuffle ratios.
Specifically, for a given ratio $p \in \{0, 0.1, 0.2, 0.4, 0.6, 0.8, 1.0\}$, we randomly select a $p$ fraction of the evaluation samples and permute the correspondence between $U$ and $V$ within this subset, while keeping the remaining $(1-p)$ portion unchanged.
This procedure preserves the marginal distributions of both $U$ and $V$ but progressively destroys their pairwise alignment as $p$ increases.

For each shuffle ratio, we recompute the proposed score and evaluate the resulting ranking against the ground-truth downstream performance.
As shown in Table~\ref{tab:shuffle_ablation}, the Spearman correlation generally degrades as $p$ increases across datasets.
At $p=0$, all datasets exhibit positive correlations (avg.\ $\rho = 0.70$); at $p=1.0$, correlations become predominantly negative (avg.\ $\rho \approx -0.40$).
The transition point varies by task type: Classification and STS datasets tend to flip sign at lower shuffle ratios ($p \approx 0.1$), while Retrieval and Clustering datasets maintain positive correlations longer (up to $p \approx 0.2\text{--}0.4$).
This behavior confirms that the proposed metric critically depends on correct $U$-$V$ alignment rather than marginal statistics alone.
\begin{table*}[htbp]
\centering
\begin{tabular}{lcccccccc}
\toprule
\textbf{Dataset} & \textbf{Task} & \textbf{$p=0$} & \textbf{$p=0.1$} & \textbf{$p=0.2$} & \textbf{$p=0.4$} & \textbf{$p=0.6$} & \textbf{$p=0.8$} & \textbf{$p=1.0$} \\
\midrule
apt-eval     & Cls & 0.43 & $-$0.36 & 0.48 & $-$0.48 & $-$0.43 & $-$0.48 & $-$0.48 \\
gtfintechlab & Cls & 0.43 & $-$0.19 & $-$0.24 & $-$0.62 & $-$0.69 & $-$0.74 & $-$0.64 \\
BhashaBench  & Cls & 0.81 & $-$0.74 & $-$0.62 & $-$0.50 & $-$0.50 & $-$0.43 & $-$0.26 \\ 
\midrule
Aug-STSB     & STS & 0.83 & $-$0.83 & $-$0.81 & $-$0.81 & $-$0.93 & $-$0.57 & $-$0.71 \\
LivNLP-STS   & STS & 0.83 & $-$0.07 & $-$0.10 & $-$0.55 & $-$0.52 & $-$0.52 & $-$0.43 \\
Philo-STS    & STS & 0.43 & $-$0.57 & $-$0.48 & $-$0.45 & $-$0.40 & $-$0.36 & $-$0.24 \\
\midrule
AIR-Bench    & Ret & 0.71 & 0.76 & 0.24 & 0.69 & $-$0.02 & 0.02 & $-$0.55 \\
arXiv '25    & Ret & 0.76 & 0.71 & 0.36 & 0.64 & 0.43 & 0.50 & 0.48 \\ 
LIMIT        & Ret & 0.81 & 0.31 & 0.29 & $-$0.81 & $-$0.67 & $-$0.36 & $-$0.48 \\
\midrule
FunPang      & Clust & 0.90 & 0.93 & 0.67 & $-$0.83 & $-$0.90 & $-$0.90 & $-$0.90 \\
Reasoning    & Clust & 0.76 & 0.26 & $-$0.43 & $-$0.07 & $-$0.02 & $-$0.21 & $-$0.24 \\ 
\bottomrule
\end{tabular}
\caption{Per-dataset Spearman correlation ($\rho$) under partial shuffle ablation. The column $p=0$ corresponds to the full method without shuffling. As shuffle proportion $p$ increases, correlation with downstream performance generally degrades.}
\label{tab:shuffle_ablation}
\end{table*}

\subsection{Full $I_s$ vs.\ Conditional-Only}
\label{sec:ablation_marginal}

To investigate the contribution of the marginal term in the $I_s$, we conduct an ablation study comparing the full $I_s$ formulation against a conditional-only variant that omits the marginal likelihood component.

\paragraph{Results.}
Table~\ref{tab:ablation_cond_only} and Figure~\ref{fig:cond_only} present the comparison between Full $I_s$ and Cond Only across all 11 datasets.
The full $I_s$ formulation achieves an average Spearman correlation of $\rho = 0.70$, substantially outperforming the Cond Only variant ($\rho = 0.21$). 
Notably, Full $I_s$ outperforms Cond Only on 9 out of 11 datasets. 
The advantage is particularly pronounced on retrieval and clustering tasks: on LIMIT, Full $I_s$ achieves $\rho = 0.81$ compared to $-0.21$ for Cond Only; on FunPang, the gap is $0.90$ vs.\ $-0.36$.
These results indicate that the conditional term alone captures only partial information about embedding quality. It measures how well the source embedding predicts the target, but fails to account for the intrinsic structure of the target embedding space.

Interestingly, on a few datasets (apt-eval, gtfintechlab), Cond Only slightly outperforms Full $I_s$. 
However, its performance is highly inconsistent, with five datasets showing negative correlations.
Full $I_s$, in contrast, stays positive on all 11 datasets.

These findings confirm that both terms in the $I_s$ formulation are necessary: the marginal term captures target-side embedding quality, while the conditional term measures cross-model information transfer.
Their combination yields a more reliable and consistent signal for unsupervised model selection.

\begin{figure*}[htbp]
\centering
\includegraphics[width=\linewidth]{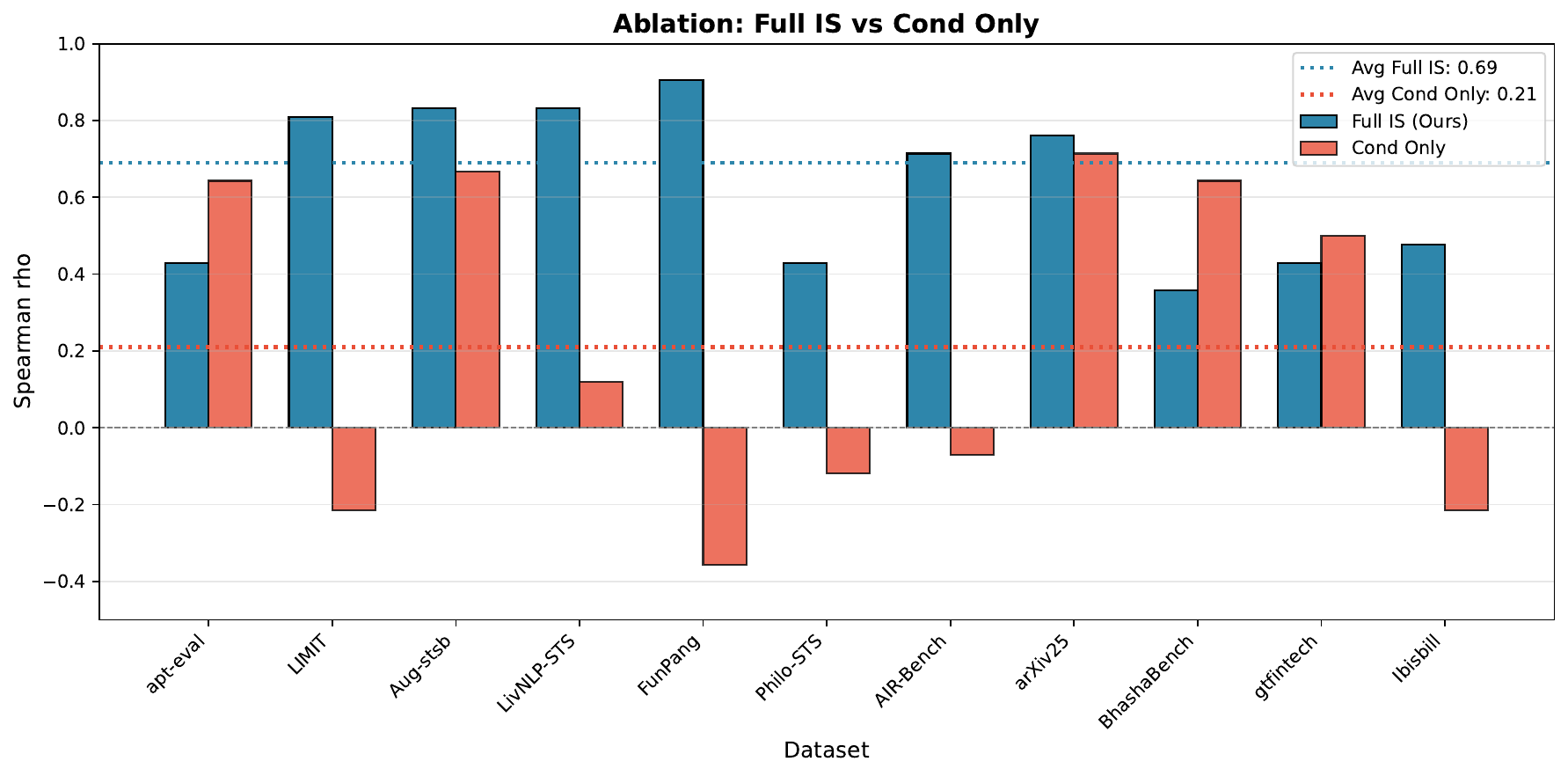}
\caption{Ablation: Comparison of correlation of Full $I_s$ vs.\ conditional-only component.}
\label{fig:cond_only}
\end{figure*}

\begin{table}[htbp]
\centering
\begin{tabular}{lcc}
\toprule
\textbf{Dataset} & \textbf{Full IS} & \textbf{Cond Only} \\
\midrule
apt-eval        & 0.43 & \textbf{0.64} \\
LIMIT           & \textbf{0.81} & $-$0.21 \\
Aug-STSB       & \textbf{0.83} & 0.67 \\
LivNLP-STS      & \textbf{0.83} & 0.12 \\
FunPang         & \textbf{0.90} & $-$0.36 \\
Philo-STS       & \textbf{0.43} & $-$0.12 \\
AIR-Bench       & \textbf{0.71} & $-$0.07 \\
arXiv '25       & \textbf{0.76} & 0.71 \\ 
BhashaBench     & \textbf{0.81} & 0.64 \\ 
gtfintechlab    & 0.43 & \textbf{0.50} \\ 
Reasoning       & \textbf{0.76} & $-$0.21 \\
\midrule
\textbf{Average} & \textbf{0.70} & 0.21 \\
\bottomrule
\end{tabular}
\caption{Ablation comparing Full $I_s$ with Cond Only. Full $I_s$ achieves substantially higher correlation with ground truth rankings (avg.\ $\rho = 0.70$ vs.\ $0.21$), confirming the importance of the marginal term.}
\label{tab:ablation_cond_only}
\end{table}

\subsection{Aggregation Strategy}
We investigate the impact of different aggregation strategies for combining IS scores into a single model score. We compare three methods: (1) arithmetic mean, (2) median, and (3) 10\% trimmed mean (Trim10), which discards the top and bottom 10\% of values before averaging.

As shown in Table~\ref{tab:ablation_aggregation}, median aggregation consistently outperforms alternatives, achieving the highest Spearman correlation on 9 of 11 datasets (average $\rho = 0.70$ vs.\ $0.52$ for mean, a relative improvement of 34.6\%). 
For Pearson correlation, median also leads with an average of $r = 0.56$ compared to $0.54$ for mean and $0.53$ for trimmed mean. 

The advantage of median aggregation is most pronounced on STS, where it reaches $\rho = 0.83$ on both Aug-STSB and LivNLP-STS, compared to $\rho \le 0.45$ for the mean. The mean is pulled by a small number of pairs with anomalously high or low IS scores, while the median is unaffected.

Exceptions include Philo-STS and arXiv '25, where mean or trimmed mean aggregation outperforms the median. 
For Philo-STS, mean aggregation ($\rho = 0.62$) surpasses median ($\rho = 0.43$), likely due to the smaller dataset size reducing the prevalence of outlier scores. 
Similarly, on arXiv '25, trimmed mean achieves the highest correlation ($\rho = 0.84$), suggesting that while some outliers exist, the distribution tails might contain valuable signal for this specific retrieval task.
Even with these exceptions, the median is the safest default across task families.

Based on these findings, we adopt median aggregation as the default strategy throughout our experiments.

\begin{table*}[htbp]
\centering
\begin{tabular}{llcccccc}
\toprule
& & \multicolumn{3}{c}{\textbf{Spearman $\rho$}} & \multicolumn{3}{c}{\textbf{Pearson $r$}} \\
\cmidrule(lr){3-5} \cmidrule(lr){6-8}
\textbf{Dataset} & \textbf{Task} & Mean & Median & Trim10 & Mean & Median & Trim10 \\
\midrule
apt-eval & Class. & 0.29 & 0.43 & 0.29 & 0.26 & 0.20 & 0.16 \\
gtfintechlab & Class. & 0.38 & 0.43 & 0.17 & 0.36 & 0.14 & -0.05 \\
BhashaBench & Class. & 0.53 & \textbf{0.81} & 0.36 & 0.69 & 0.59 & 0.45 \\ 
\midrule
Aug-STSB & STS & 0.40 & \textbf{0.83} & 0.69 & 0.54 & 0.68 & 0.63 \\
LivNLP-STS & STS & 0.45 & \textbf{0.83} & 0.52 & 0.55 & \textbf{0.94} & 0.53 \\
Philo-STS & STS & \textbf{0.62} & 0.43 & 0.52 & \textbf{0.66} & 0.49 & 0.63 \\
\midrule
AIR-Bench & Retr. & 0.43 & \textbf{0.71} & 0.48 & 0.52 & 0.70 & \textbf{0.80} \\
arXiv '25 & Retr. & 0.70 & 0.76 & \textbf{0.84} & 0.25 & 0.46 & \textbf{0.56} \\ 
LIMIT & Retr. & 0.62 & \textbf{0.81} & \textbf{0.81} & 0.61 & 0.62 & \textbf{0.65} \\
\midrule
FunPang & Clust. & 0.81 & \textbf{0.90} & 0.88 & \textbf{0.95} & 0.83 & 0.93 \\
Reasoning & Clust. & 0.52 & \textbf{0.76} & 0.60 & \textbf{0.56} & 0.55 & \textbf{0.57} \\
\midrule
\multicolumn{2}{l}{\textbf{Average}} & 0.52 & \textbf{0.70} & 0.56 & 0.54 & \textbf{0.56} & 0.53 \\ 
\bottomrule
\end{tabular}
\caption{Aggregation ablation comparing mean, median, and trimmed mean (10\%). The median aggregation (Ours) achieves the highest consistency ($\rho=0.70$).}
\label{tab:ablation_aggregation}
\end{table*}

\section{Additional Analysis Details}
\label{app:additional}

This section provides full experimental results for the analyses summarized in Section~\ref{subsec:additional_analysis}.

\subsection{Assumption Validation}
\label{app:assumption_validation}

This section presents the full experimental details for the three validation experiments summarized in Section~\ref{subsec:additional_analysis}.

\paragraph{Logical structure of the test.}
Assumption~\ref{ass:indep-app} states that the dominant Jacobian singular directions of different coupling layers are approximately independent. Together with a per-layer Lipschitz factor close to $1$, this implies that the total Lipschitz constant of the $L$-layer flow grows approximately linearly in $L$ rather than exponentially, which is the regime under which Theorem~\ref{thm:gen-bound} provides a useful generalization bound. The three experiments are organized to verify this chain from complementary perspectives. Experiment~1 tests the core geometric assumption by checking that the dominant Jacobian directions of different layers are not aligned. Experiment~2 verifies the required local condition by showing that per-layer amplification remains close to $1$ while layers still perform non-trivial transformations. Experiment~3 examines the global consequence by directly probing whether total perturbation amplification remains $O(1)$ in trained flows. The assumption itself cannot be exhaustively verified in high-dimensional parameter spaces; instead, the quantitative agreement among these three empirical observations is what makes Assumption~\ref{ass:indep-app} plausible in practice. In terms of scope, Experiment~1 is conducted on a small set of representative conditional flows from the FUNPANG dataset, while Experiments~2 and 3 aggregate statistics across conditional flows from all 11 datasets.

\paragraph{Experiment 1: Singular direction independence.}
We first estimate each coupling layer's Jacobian dominant singular vector via randomized finite-difference probing and compute pairwise $|\cos\theta|$ across all layer pairs (Table~\ref{tab:cosine_sim}). Two independent random vectors in $\mathbb{R}^d$ have expected $|\cos\theta|\approx 1/\sqrt{d}=0.016$, which provides the natural \emph{independence baseline}: values at or below $0.016$ indicate that dominant directions of different layers are no more aligned than random vectors.

\begin{table}[htbp]
\centering
\begin{tabular}{lccc}
\toprule
\textbf{Model Pair} & \textbf{Mean $|\cos\theta|$} & \textbf{Max} & \textbf{Ratio} \\
\midrule
bge $\to$ Zeta & 0.012 & 0.040 & 0.75$\times$ \\
GritLM $\to$ SFR & 0.011 & 0.030 & 0.69$\times$ \\
gte $\to$ Linq & 0.009 & 0.024 & 0.56$\times$ \\
\midrule
\textbf{Average} & \textbf{0.010} & \textbf{0.040} & \textbf{0.65$\times$} \\
\bottomrule
\end{tabular}
\caption{Inter-layer cosine similarity of dominant singular vectors. The random baseline is $1/\sqrt{d}=0.016$; all observed values are below it.}
\label{tab:cosine_sim}
\end{table}

The mean $|\cos\theta|=0.010$ is consistently \emph{below} this baseline (ratio $0.65\times$ on average across the three pairs), and the worst single layer pair only reaches $0.040$. In other words, dominant singular directions of different coupling layers are no more aligned than randomly drawn unit vectors. We attribute this to the alternating binary masks and the random coordinate permutation applied after every coupling block in our NSF backbone (Section~\ref{sec:method-flow}): each permutation rotates the ``active'' coordinate subspace, so consecutive layers structurally cannot inherit each other's dominant directions. The three model pairs span very different source/target architectures yet produce nearly identical numbers, supporting our claim in Section~\ref{subsec:additional_analysis} that this is a property of the backbone rather than of any particular embedding.

To strengthen the test beyond a single direction, we also compute principal angles between the top-$3$ singular subspaces of adjacent layers (Table~\ref{tab:principal_angles}). Every reported angle lies in $[87.6^\circ,89.7^\circ]$: the top-$3$ subspace of one layer projects onto that of the next layer with magnitude at most $\cos(87.6^\circ)\approx 0.04$, so even when we widen the comparison from one direction to a three-dimensional subspace the layers remain essentially orthogonal.

\begin{table}[htbp]
\centering
\begin{tabular}{lc}
\toprule
\textbf{Layer Pair} & \textbf{Principal Angles} \\
\midrule
$L_0 \leftrightarrow L_1$ & $87.8^\circ,\; 88.4^\circ,\; 89.6^\circ$ \\
$L_1 \leftrightarrow L_2$ & $87.6^\circ,\; 88.6^\circ,\; 89.7^\circ$ \\
$L_2 \leftrightarrow L_3$ & $87.8^\circ,\; 88.2^\circ,\; 89.2^\circ$ \\
$L_3 \leftrightarrow L_4$ & $88.3^\circ,\; 88.8^\circ,\; 89.7^\circ$ \\
$L_4 \leftrightarrow L_5$ & $88.7^\circ,\; 89.6^\circ,\; 89.7^\circ$ \\
\bottomrule
\end{tabular}
\caption{Principal angles between adjacent layers' top-3 singular subspaces.}
\label{tab:principal_angles}
\end{table}

\paragraph{Experiment 2: Per-layer behavior.}
We characterize each coupling layer by two complementary quantities: the relative point-wise displacement $\|f(y)-y\|/\|y\|$, which measures whether the layer performs a non-trivial transformation rather than collapsing to the identity, and the per-layer amplification factor (the empirical Lipschitz factor probed in random directions), which measures how much it can stretch perturbations. Both are computed for every coupling layer of conditional flows trained on all 11 datasets (Table~\ref{tab:displacement}). The mean displacement of $0.390$ shows that layers do meaningful work, while the geometric-mean amplification of $1.049$ confirms that the per-layer Lipschitz factor is close to $1$, satisfying the precondition required for Assumption~\ref{ass:indep-app} to deliver a useful linear-growth bound.

\begin{table}[htbp]
\centering
\begin{tabular}{lc}
\toprule
\textbf{Metric} & \textbf{Value} \\
\midrule
Mean displacement per layer & 0.390 \\
Median & 0.383 \\
Max & 0.586 \\
Per-layer amplification (geo.\ mean) & 1.049 \\
\bottomrule
\end{tabular}
\caption{Per-layer relative displacement $\|f(y)-y\|/\|y\|$ across the conditional flows.}
\label{tab:displacement}
\end{table}

\paragraph{Experiment 3: Perturbation amplification.}
We test the consequence directly. For each conditional flow we add a random-direction perturbation of norm $\varepsilon{=}0.01$ to the input and measure the resulting relative output change $\|f(y+\varepsilon u)-f(y)\|/\varepsilon$ after the full sequence of 18 atomic transforms (Table~\ref{tab:amplification}). Random-direction probing does not certify a worst-case Lipschitz bound, but, combined with the near-orthogonality of dominant singular directions verified in Experiment~1, it provides a faithful empirical estimate of the typical amplification a perturbation experiences in the trained flow. The mean probed amplification of $2.38\times$ closely matches the per-layer factor of $1.049$ compounded over $18$ layers ($1.049^{18}\approx2.38$), and is many orders of magnitude smaller than what even mildly larger per-layer factors would produce (Table~\ref{tab:amplification_comparison}).

\begin{table}[htbp]
\centering
\begin{tabular}{lc}
\toprule
\textbf{Metric} & \textbf{Value} \\
\midrule
Mean total amplification & $2.38\times$ \\
Std & 0.22 \\
Min / Max & 1.94 / 2.79 \\
Per-layer geometric mean & 1.049 \\
\bottomrule
\end{tabular}
\caption{Perturbation amplification across the 18 atomic transforms. The observed $O(1)$ amplification is consistent with linear Lipschitz growth.}
\label{tab:amplification}
\end{table}

\begin{table}[htbp]
\centering
\begin{tabular}{cc}
\toprule
\textbf{Per-layer $\sigma$} & \textbf{Total ($\sigma^{18}$)} \\
\midrule
\textbf{1.049 (ours)} & \textbf{$2.38\times$} \\
1.1 & $5.6\times$ \\
1.5 & $1{,}478\times$ \\
2.0 & $262{,}144\times$ \\
\bottomrule
\end{tabular}
\caption{Comparison of observed amplification with hypothetical exponential growth at different per-layer rates.}
\label{tab:amplification_comparison}
\end{table}

The three experiments together pin down distinct links of the same chain: non-alignment between layer Jacobians (Experiment~1), near-unit per-layer amplification (Experiment~2), and bounded global amplification of the full flow (Experiment~3). Their numerical agreement places the trained flows in the regime where the linear Lipschitz bound underlying Theorem~\ref{thm:gen-bound} is empirically realized.

\subsection{Top-3 Model Identification}
\label{app:top3}

For each dataset we form two unordered sets of size three: \textsc{GT-Top3}, the three highest-scoring models under the supervised metric, and \textsc{FLARE-Top3}, the three highest-scoring models under \methodname's information-sufficiency score $I_s$. The reported overlap is the number of models shared by the two sets, ranging from $0$ to $3$, and is order-agnostic. Ties in either ranking are broken in a fixed lexicographic order over model names; with the gaps observed in our pool ($N{=}8$) this only affects fewer than three borderline assignments.

The Top-3 overlap targets the practitioner-relevant question of whether the ranker surfaces the genuinely best models for deployment. In practice only the top few candidates are actually deployed, so a low full-ranking Spearman $\rho$ does not by itself indicate a poor selector: as long as the Top-3 is correct, mid- and bottom-rank shuffles are inconsequential. By contrast, Spearman $\rho$ over the full $N{=}8$ list (used in the main results) over-penalises such harmless permutations, while Top-1 is too sensitive to a single GT measurement. Top-3 strikes a compromise: with $N{=}8$ the random-selection baseline yields an expected overlap of only $9/8{=}1.125$ and produces a perfect $3/3$ match less than $2\%$ of the time, so even $2/3$ already strongly suggests a non-trivial signal.

Table~\ref{tab:top3_full} reports the per-dataset breakdown. Across datasets, the four with $3/3$ overlap (FunPang, LIMIT, Aug-STSB, LivNLP-STS) are also those with the largest GT gap between the top-3 cluster and the rest, which makes the decision boundary unambiguous. Conversely, datasets where the supervised top-3 scores are tightly clustered tend to yield smaller overlaps, suggesting that the difficulty of Top-3 identification is largely set by the GT separability of the candidate models.

\begin{table*}[!t]
\centering
\begin{tabular}{llllc}
\toprule
\textbf{Dataset} & \textbf{Task} & \textbf{GT Top-3} & \textbf{FLARE Top-3} & \textbf{Overlap} \\
\midrule
Aug-STSB & STS & Linq, SFR, GritLM & SFR, GritLM, Linq & 3/3 \\
LivNLP-STS & STS & Linq, SFR, BGE & SFR, Linq, BGE & 3/3 \\
LIMIT & Retrieval & Zeta, Linq, SFR & Linq, SFR, Zeta & 3/3 \\
FunPang & Clustering & Linq, Zeta, SFR & Zeta, Linq, SFR & 3/3 \\
\midrule
AIR-Bench & Retrieval & Zeta, SFR, Qwen2 & Zeta, GritLM, SFR & 2/3 \\
arXiv '25 & Retrieval & GritLM, Linq, Zeta & Linq, SFR, Zeta & 2/3 \\
Reasoning & Clustering & SFR, GritLM, Zeta & Zeta, SFR, Linq & 2/3 \\
gtfintechlab & Class. & Zeta, BGE, Linq & Zeta, Linq, SFR & 2/3 \\
apt-eval & Class. & GritLM, Linq, BGE & Linq, BGE, Qwen2 & 2/3 \\
Philo-STS & STS & Linq, Qwen2, SFR & SFR, Linq, Zeta & 2/3 \\
\midrule
BhashaBench & Class. & GritLM, Qwen2, BGE & Zeta, GritLM, SFR & 1/3 \\
\bottomrule
\end{tabular}
\caption{Top-3 model identification across all 11 datasets. \methodname\ achieves exact $3/3$ overlap on $4/11$ (36.4\%) datasets and at least $2/3$ overlap on $10/11$ (90.9\%) datasets.}
\label{tab:top3_full}
\end{table*}

\subsection{Bootstrap Confidence Intervals}
\label{app:bootstrap_ci}

The most actionable risk for a small candidate pool ($N{=}8$) is that the ranking depends on which models happen to be included. To probe this, we run a leave-one-out bootstrap over the 8-model pool on the per-dataset $8\times8$ information-sufficiency matrix: for each of the $8$ source models we drop one model at a time, recompute the per-source IS score and Spearman $\rho$ against the supervised ranking on the remaining $7$ models, and report the range $[\rho_{\min}, \rho_{\max}]$ across the $8$ resampled replicates as a non-parametric confidence interval. The aggregate row in Table~\ref{tab:bootstrap_ci} averages the 11 per-dataset point estimates and the per-dataset $[\rho_{\min}, \rho_{\max}]$ bounds.

The width of the resampled range (Table~\ref{tab:bootstrap_ci}) is itself informative. The narrowest ranges (FunPang $[0.857, 0.964]$, BhashaBench $[0.739, 0.919]$) appear on datasets where the IS scores are well separated, so removing any single model leaves the remaining ordering largely intact. Conversely, the widest range (gtfintechlab $[0.143, 0.607]$) is on a small classification benchmark with tightly clustered ground-truth scores, where one model's inclusion can flip several adjacent ranks. In every dataset $\rho_{\min}$ remains positive, and the aggregate lower bound ($0.583$) stays well above $0$, so the positive-correlation finding does not hinge on the inclusion of any particular model.

\subsection{Weight Perturbation Robustness}
\label{app:perturbation}

For each trained conditional flow we form a perturbed copy by adding i.i.d.\ Gaussian noise to every parameter tensor, scaled to the tensor's own magnitude: for tensor $W$ with mean absolute magnitude $\overline{|W|}$, $\Delta W\sim\mathcal{N}(0,(\sigma\,\overline{|W|})^2 I)$. Per-tensor scaling avoids a single global noise level overwhelming small-magnitude tensors (e.g.\ ActNorm scales) while leaving large-magnitude ones untouched, and gives the standard flat-minimum probe interpretation~\citep{hochreiter1997flat,keskar2016large}. We sweep $\sigma\in\{1\%,2\%,5\%,10\%,20\%\}$ with $3$ noise draws per configuration, evaluate the conditional NLL of the clean and perturbed flows on the held-out validation split, and report the relative change $\Delta\mathrm{NLL}/\mathrm{NLL}$ in Table~\ref{tab:perturbation}.

The median relative NLL change stays below $0.03\%$ for $\sigma\le 5\%$ and reaches only $1.22\%$ at $\sigma{=}20\%$, indicating that trained flows sit in flat basins and are essentially insensitive to small parameter perturbations. The widening gap between median and mean shows that sensitivity is concentrated in a small number of pairs rather than being a generic property of the ensemble.

\begin{table}[!t]
\centering
\begin{tabular}{lcccc}
\toprule
\textbf{$\sigma$} & \textbf{Median} & \textbf{Mean} & \textbf{Std} & \textbf{Max} \\
\midrule
1\% & +0.00\% & +0.09\% & 0.54\% & 5.11\% \\
2\% & +0.01\% & +0.29\% & 1.13\% & 9.58\% \\
5\% & +0.02\% & +1.21\% & 3.52\% & 38.3\% \\
10\% & +0.13\% & +3.89\% & 8.97\% & 81.3\% \\
20\% & +1.22\% & +16.4\% & 36.6\% & 389\% \\
\bottomrule
\end{tabular}
\caption{Relative NLL change (\%) under weight perturbation across 616 model pairs from 11 datasets. Median values are reported alongside mean, std, and max. At $\sigma{\le}5\%$, the median NLL change remains below $0.03\%$, indicating that the vast majority of trained flows reside in flat, stable minima.}
\label{tab:perturbation}
\end{table}

\end{document}